\newtheorem{theorem}{Theorem}
\newtheorem{lemma}[theorem]{Lemma}
\theoremstyle{definition}
\newcommand{\LL}[0]{f}
\newcommand{\Ts}[0]{\mathcal{T}}
\newcommand{\vv}[0]{\mathbf{v}}
\newcommand{\vg}[0]{\mathbf{g}}
\newcommand{\RR}[0]{\mathbb{R}}
\newcommand{\rnorm}[0]{\right| }
\newcommand{\lnorm}[0]{\left| }
\newcommand{\Dtheta}[0]{\Delta \theta}
\newcommand{\hess}[0]{\mathbf{H}}
\newcommand{\es}[0]{\mathbf{e}}
\newcommand{\example}[0]{\mathbf{x}}
\newcommand{\Fbf}[0]{\mathbf{F}}
\newcommand{\Abf}[0]{\mathbf{A}}
\newcommand{\el}[1]{_{#1}}
\def\slantfrac#1#2{\kern.1em^{#1}\kern-.3em/\kern-.1em_{#2}}
\newcommand{\razvan}[1]{\textcolor{black}{#1}}
\newcommand{\cho}[1]{\textcolor{black}{#1}}
\DeclareMathOperator*{\argmin}{arg\,min}
\begin{document}

\title{Identifying and attacking the saddle point problem in high-dimensional non-convex optimization}

\author{Yann N. Dauphin\\
Universit\'e de Montr\'eal\\
\texttt{dauphiya@iro.umontreal.ca}
\And
Razvan Pascanu\\
Universit\'e de Montr\'eal\\
\texttt{r.pascanu@gmail.com} \\
\And
Caglar Gulcehre \\
Universit\'e de Montr\'eal\\
\texttt{gulcehrc@iro.umontreal.ca} \\
\And
Kyunghyun Cho \\
Universit\'e de Montr\'eal\\
\texttt{kyunghyun.cho@umontreal.ca} \\
\And
Surya Ganguli\\
Stanford University\\
\texttt{sganguli@standford.edu}\\
\And
Yoshua Bengio\\
Universit\'e de Montr\'eal, CIFAR Fellow \\
\texttt{yoshua.bengio@umontreal.ca}\\
}

\maketitle

\begin{abstract}
    A central challenge to many fields of science and engineering involves
minimizing non-convex error functions over continuous, high dimensional spaces.
Gradient descent or quasi-Newton methods are almost ubiquitously used to
perform such minimizations, and it is often thought that a main source of
difficulty for these local methods to find the global minimum is the
proliferation of local minima with much higher error than the global minimum.
Here we argue, based on results from statistical physics, random matrix theory,
neural network theory, and empirical evidence, that a deeper and more profound
difficulty originates from the proliferation of saddle points, not local
minima, especially in high dimensional problems of practical interest. Such
saddle points are surrounded by high error plateaus that can dramatically slow
down learning, and give the illusory impression of the existence of a local
minimum.  Motivated by these arguments, we propose a new approach to
second-order optimization, the saddle-free Newton method, that can rapidly
escape high dimensional saddle points, unlike gradient descent and quasi-Newton
methods.  We apply this algorithm to deep or recurrent neural network training,
and provide numerical evidence for its superior optimization performance.
\iftoggle{arxiv}{This work extends the results of \citet{Pascanu14}.}
{}
\end{abstract}

\section{Introduction}

It is often the case that our geometric intuition, derived from experience
within a low dimensional physical world, is inadequate for thinking about the
geometry of typical error surfaces in high-dimensional spaces.  To illustrate
this, consider minimizing a randomly chosen error function of a single scalar
variable, given by a single draw of a Gaussian process. \citep{Rasmussen05}
have shown that such a random error function would have many local minima and
maxima, with high probability over the choice of the function, but saddles
would occur with negligible probability.  On the other-hand, as we review
below, typical, random Gaussian error functions over $N$ scalar variables, or
dimensions, are increasingly likely to have saddle points rather than local
minima as $N$ increases.  Indeed the ratio of the number of saddle points to
local minima increases {\it exponentially} with the dimensionality $N$. 

A typical problem for both local minima and saddle-points is that they are
often surrounded by plateaus of small curvature in the error.  While gradient
descent dynamics are repelled away from a saddle point to lower error by
following directions of negative curvature, this repulsion can occur slowly due
to the plateau.  Second order methods, like the Newton method, are designed to
rapidly descend plateaus surrounding local minima by rescaling gradient steps
by the inverse eigenvalues of the Hessian matrix.  However, the Newton method
does not treat saddle points appropriately; as argued below, saddle-points
instead become {\it attractive} under the Newton dynamics.

Thus, given the proliferation of saddle points, not local minima, in high
dimensional problems, the entire theoretical justification for quasi-Newton
methods, i.e. the ability to rapidly descend to the bottom of a convex local
minimum, becomes less relevant in high dimensional non-convex optimization.  In
this work, which is an extension of the previous report \citet{Pascanu14}, 
we first want to raise awareness of this issue, and second, propose
an alternative approach to second-order optimization that aims to rapidly
escape from saddle points.  This algorithm leverages second-order curvature
information in a fundamentally different way than quasi-Newton methods, and
also, in numerical experiments, outperforms them in some high dimensional
problems involving deep or recurrent networks.

\section{The prevalence of saddle points in high dimensions}
\label{sec:theory}

Here we review arguments from disparate literatures suggesting that saddle
points, not local minima, provide a fundamental impediment to rapid high
dimensional non-convex optimization.  One line of evidence comes from
statistical physics.  \citet{Bray07,Fyodorov07} study the nature of critical
points of random Gaussian error functions on high dimensional continuous
domains using replica theory (see \citet{Parisi07} for a recent review of this
approach).

One particular result by \citet{Bray07} derives how critical points are
distributed in the $\epsilon$ vs $\alpha$ plane, where $\alpha$ is the index,
or the fraction of negative eigenvalues of the Hessian at the critical point,
and  $\epsilon$ is the error attained at the critical point. Within this plane,
critical points concentrate on a monotonically increasing curve as $\alpha$
ranges from $0$ to $1$, implying a strong correlation between the error
$\epsilon$ and the index $\alpha$: the larger the error the larger the index.
The probability of a critical point to be an $O(1)$ distance off the curve is
exponentially small in the dimensionality $N$, for large $N$. This implies that
critical points with error $\epsilon$ much larger than that of the global
minimum, are exponentially likely to be saddle points, with the fraction of
negative curvature directions being an increasing function of the error.
Conversely, all local minima, which necessarily have index $0$, are likely to
have an error very close to that of the global minimum.  Intuitively, {\em in
high dimensions, the chance that all the directions around a critical point
lead upward (positive curvature) is exponentially small} w.r.t. the number of
dimensions, unless the critical point is the global minimum or stands at an
error level close to it, i.e., it is unlikely one can find a way to go further
down.

These results may also be understood via random matrix theory.  We know that
for a large Gaussian random matrix the eigenvalue distribution follows Wigner's
famous semicircular law~\citep{Wigner58}, with both mode and mean at $0$.  The
probability of an eigenvalue to be positive or negative is thus
$\slantfrac{1}{2}$.  \citet{Bray07} showed that the eigenvalues of the Hessian
at a critical point are distributed in the same way, except that the
semicircular spectrum is shifted by an amount determined by $\epsilon$. For the
global minimum, the spectrum is shifted so far right, that all eigenvalues are
positive. As $\epsilon$ increases, the spectrum shifts to the left and accrues
more negative eigenvalues as well as a density of eigenvalues around $0$,
indicating the typical presence of plateaus surrounding saddle points at large
error.  Such plateaus would slow the convergence of first order optimization
methods, yielding the illusion of a local minimum.

The random matrix perspective also concisely and intuitively crystallizes the
striking difference between the geometry of low and high dimensional error
surfaces.  For $N=1$, an exact saddle point is a $0$--probability event as it
means randomly picking an eigenvalue of exactly $0$. As $N$ grows it becomes
exponentially unlikely to randomly pick all eigenvalues to be positive or
negative, and therefore most critical points are saddle points. 

\citet{Fyodorov07} review qualitatively similar results derived for random
error functions superimposed on a quadratic error surface.  These works
indicate that for typical, generic functions chosen from a random Gaussian
ensemble of functions, local minima with high error are exponentially rare in
the dimensionality of the problem, but saddle points with many negative and
approximate plateau directions are exponentially likely.  However, is this
result for generic error landscapes applicable to the error landscapes of
practical problems of interest?

\citet{Baldi89} analyzed the error surface of a multilayer perceptron (MLP)
with a single linear hidden layer.  Such an error surface shows only
saddle-points and \emph{no} local minima. This result is qualitatively
consistent with the observation made by \citet{Bray07}.  Indeed
\citet{Saxe-ICLR2014} analyzed the dynamics of learning in the presence of
these saddle points, and showed that they arise due to scaling symmetries in
the weight space of a deep linear MLP. These scaling symmetries enabled
\citet{Saxe-ICLR2014} to find new exact solutions to the nonlinear dynamics of
learning in deep linear networks. These learning dynamics exhibit plateaus of
high error followed by abrupt transitions to better performance.  They
qualitatively recapitulate aspects of the hierarchical development of semantic
concepts in infants \citep{Saxe-Cogsci}.

In \citep{Saad95} the dynamics of stochastic gradient descent are analyzed for
soft committee machines.  This work explores how well a student network can
learn to imitate a randomly chosen teacher network. Importantly, it was
observed that learning can go through an initial phase of \emph{being trapped
in the symmetric submanifold} of weight space. In this submanifold, the
student's hidden units compute similar functions over the distribution of
inputs.   The slow learning dynamics within this submanifold originates from
saddle point structures (caused by permutation symmetries among hidden units),
and their associated plateaus \citep{MagnusSA_98,Inoue03}.  The exit from the
plateau associated with the symmetric submanifold corresponds to the
differentiation of the student's hidden units to mimic the teacher's hidden
units.  Interestingly, this exit from the plateau is achieved by following
directions of negative curvature associated with a saddle point. 
sin directions perpendicular to the symmetric submanifold.

\citet{Mizutani10} look at the effect of negative curvature on learning and
implicitly at the effect of saddle points in the error surface.
Their findings are similar.  They show that the error
surface of a single layer MLP has saddle points where the Hessian
matrix is indefinite.

\section{Experimental validation of the prevalence of saddle points}

In this section, we experimentally test whether the theoretical predictions
presented by \citet{Bray07} for random Gaussian fields hold for neural networks.
To our knowledge, this is the first attempt to measure the relevant statistical properties of neural network 
error surfaces and to test if the theory developed for random Gaussian fields generalizes to such cases.

In particular, we are interested in how the critical points of a single layer MLP
are distributed in the $\epsilon$--$\alpha$ plane, and how the eigenvalues of the Hessian
matrix at these critical points are distributed. We used a small MLP 
 trained on a down-sampled version of MNIST and CIFAR-10. Newton
method was used to identify critical points of the error function. The results
are in Fig.~\ref{fig:index}. 
\iftoggle{arxiv}{
More details about the setup are provided in Appendix~\ref{sec:apx_emp}.
}{
More details about the setup are provided in the supplementary material. 
}
 
\begin{figure}
\centering
\begin{minipage}{0.49\textwidth}
    \centering
    MNIST

\begin{minipage}{0.49\textwidth}
    \includegraphics[width=1.\textwidth,clip=true, trim=1cm 0cm 1cm 0cm]{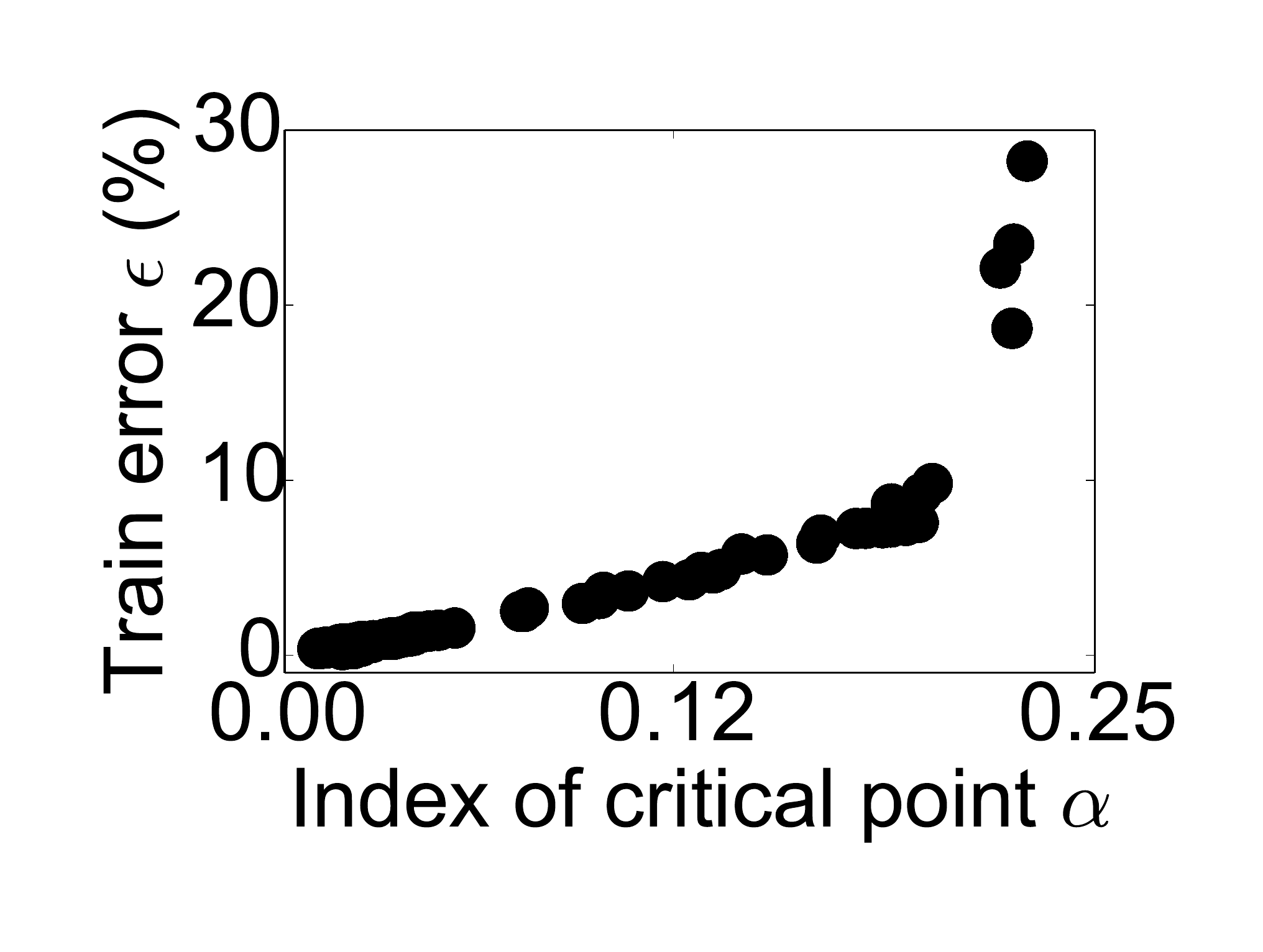}
\end{minipage}
\hfill
\begin{minipage}{0.49\textwidth}
    \includegraphics[width=1.\textwidth,clip=true, trim=1.5cm 0cm 0cm 0cm]{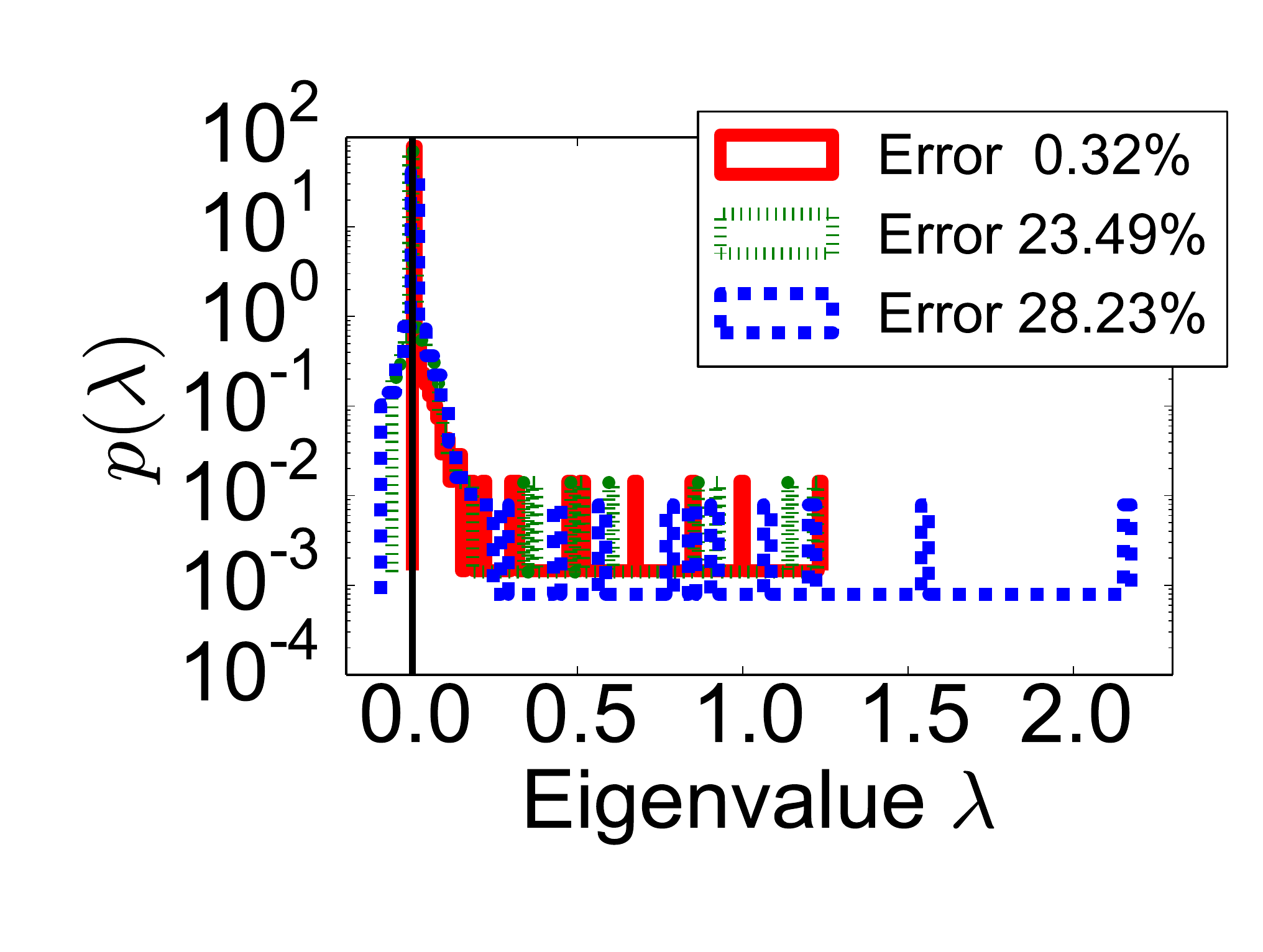}
\end{minipage}

\begin{minipage}{0.49\textwidth}
\centering
(a)
\end{minipage}
\hfill
\begin{minipage}{0.49\textwidth}
\centering
(b)
\end{minipage}
\end{minipage}
\hfill
\begin{minipage}{0.49\textwidth}
    \centering
    CIFAR-10

\begin{minipage}{0.49\textwidth}
    \includegraphics[width=1.\textwidth,clip=true, trim=1cm 0cm 1cm 0cm]{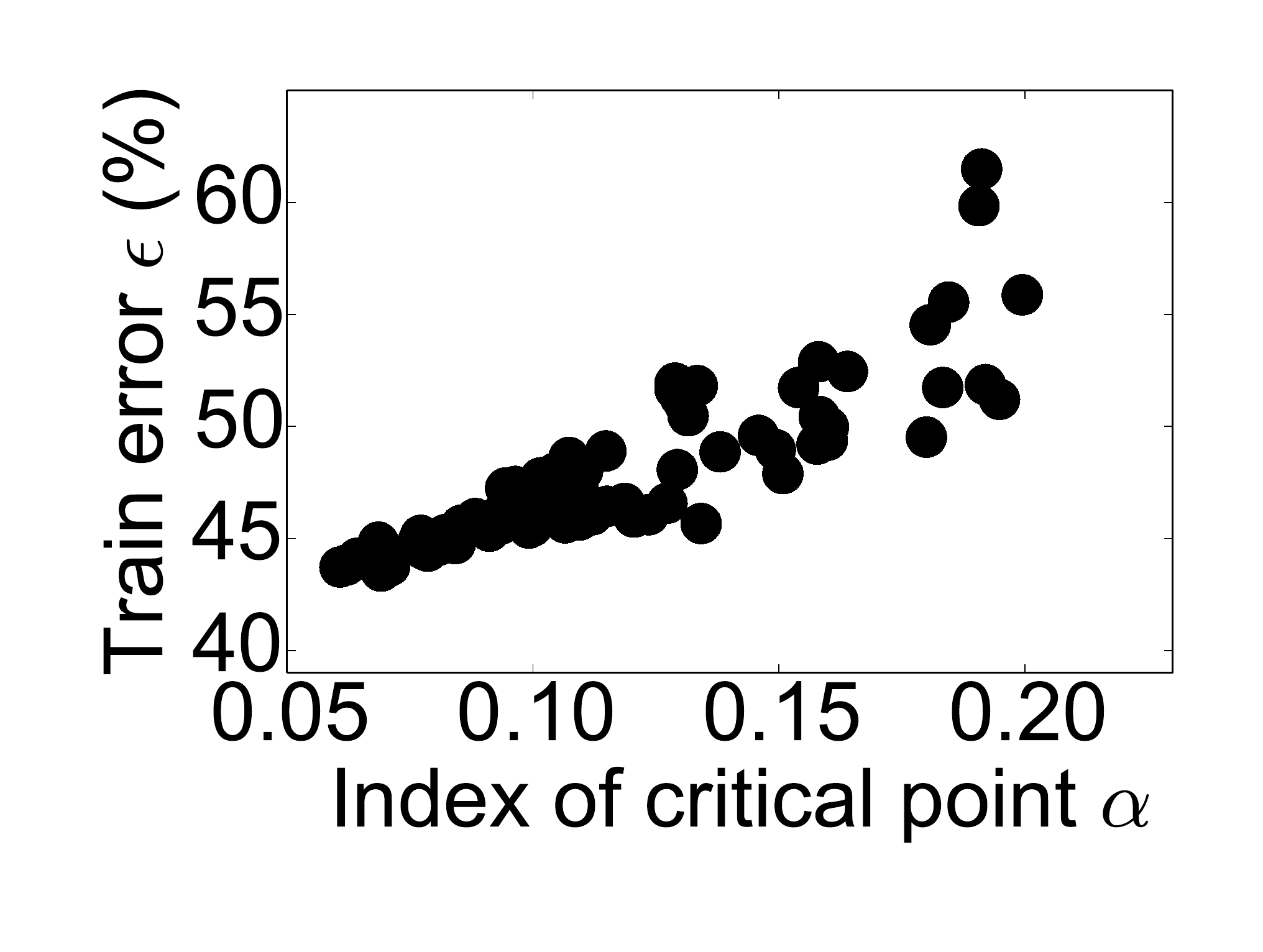}
\end{minipage}
\hfill
\begin{minipage}{0.49\textwidth}
    \includegraphics[width=1.\textwidth,clip=true, trim=1.5cm 0cm 0cm 0cm]{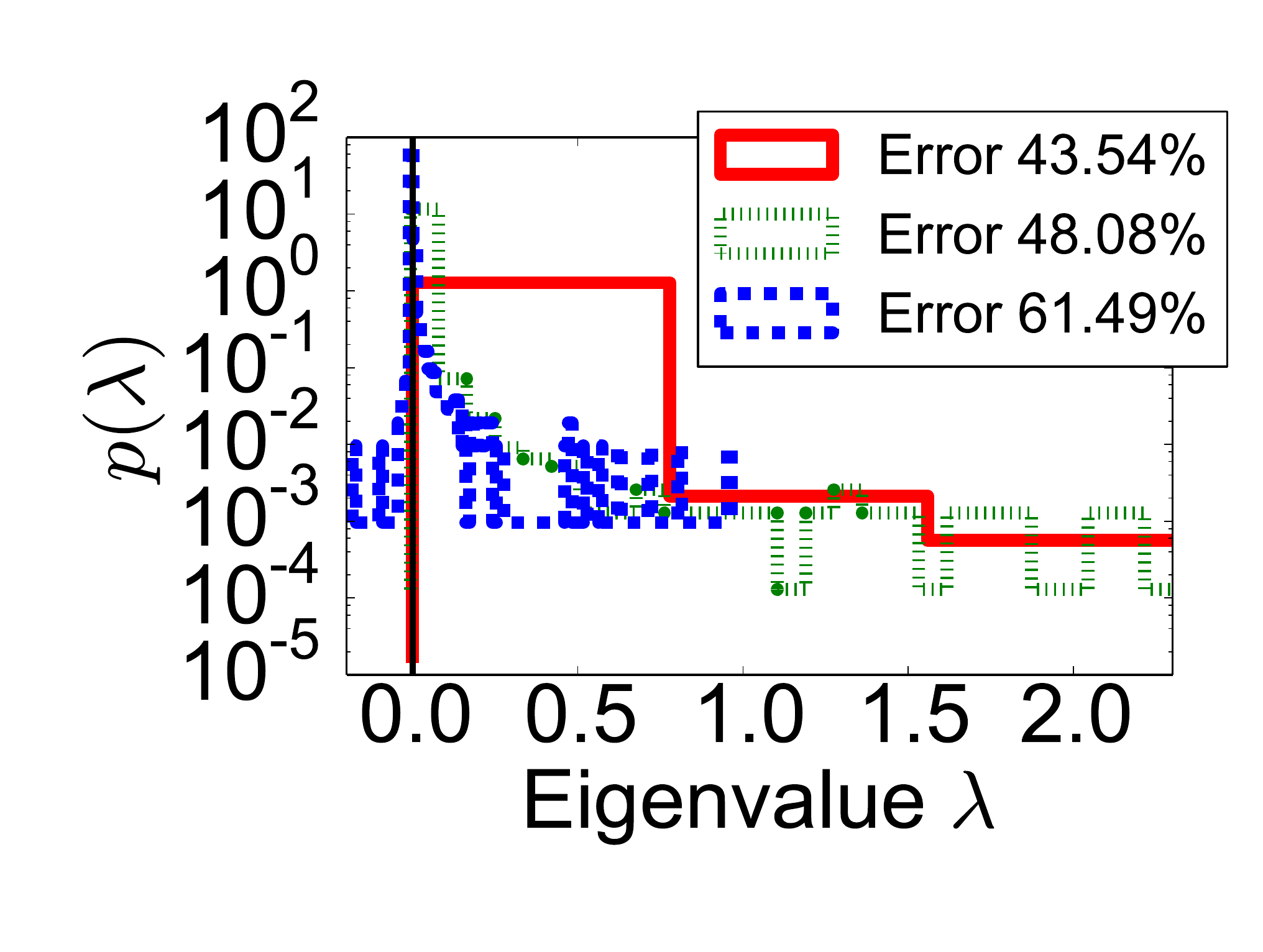}
\end{minipage}

\begin{minipage}{0.49\textwidth}
\centering
(c)
\end{minipage}
\hfill
\begin{minipage}{0.49\textwidth}
\centering
(d)
\end{minipage}
\end{minipage}
    \caption{
        (a) and (c) show how critical points are distributed in the $\epsilon$--$\alpha$
        plane. Note that they concentrate along a monotonically increasing
        curve.  (b) and (d) plot the distributions of eigenvalues of the
        Hessian at three different critical points.  Note that the y axes are 
        in logarithmic scale. \razvan{The vertical lines in (b) and (d) depict 
	the position of 0.}
    }
    \label{fig:index} 
    \vspace{-3mm}
\end{figure}

This empirical test confirms that the observations by
\citet{Bray07} qualitatively hold for neural networks.
Critical points concentrate along a monotonically increasing 
curve in the $\epsilon$--$\alpha$ plane. Thus the prevalence of high error saddle points
do indeed pose a severe problem for training neural networks. While the eigenvalues do
not seem to be exactly distributed according to the semicircular law,  their 
distribution does shift to the left as the error increases.
The large mode at 0 indicates that there is a plateau around any critical 
point of the error function of a neural network.

\section{Dynamics of optimization algorithms near saddle points}
\label{sec:optim_dynam}

Given the prevalence of saddle points, it is important to understand how various optimization algorithms behave near them. Let us focus on non-degenerate saddle points for which the Hessian is not
singular. These critical points can be locally analyzed by re-parameterizing the
function according to Morse's lemma below (see chapter 7.3, Theorem 7.16 in
\citet{callahan2010advanced} 
\iftoggle{arxiv}{
or Appendix~\ref{sec:apx_reparam}:
}{
or the supplementary material for details):
}

\begin{equation}
\label{eq:new_system_coord}
\LL(\theta^* + \Dtheta) = \LL(\theta^*) + \frac{1}{2} \sum_{i=1}^{n_\theta}
{\lambda\el i} \Delta \vv_i^2,
\end{equation}
where $\lambda\el i$ represents the $i$th eigenvalue of the Hessian, and 
$\Delta\vv_i$ are the new parameters of the model corresponding to motion along 
the eigenvectors $\es\el i$ of the Hessian of $\LL$ at $\theta^*$.

A step of the \emph{gradient descent} method always points in the right direction close
to a saddle point (SGD in Fig.~\ref{fig:saddle_methods}).  If an eigenvalue
${\lambda\el i}$ is positive (negative), then the step moves toward (away) from $\theta^*$ along $\Delta\vv_i$ because the
restriction of $\LL$ to the corresponding eigenvector direction $\Delta\vv_i$, achieves a 
minimum (maximum) at $\theta^*$.  The drawback of the gradient descent method is not the direction, but the \emph{size} of
the step along each eigenvector. The step, along any direction
$\es\el i$, is given by $-{\lambda\el i} \Delta \vv_i$, and so small steps are taken in directions corresponding to eigenvalues of 
small absolute value. 

\begin{figure}[t]
\centering
    \begin{minipage}{.64\textwidth}
        \centering
        \begin{minipage}{0.49\textwidth}
            \centering
            \includegraphics[width=1\textwidth,clip=true, trim=2.5cm 2.5cm 3cm 3.7cm]{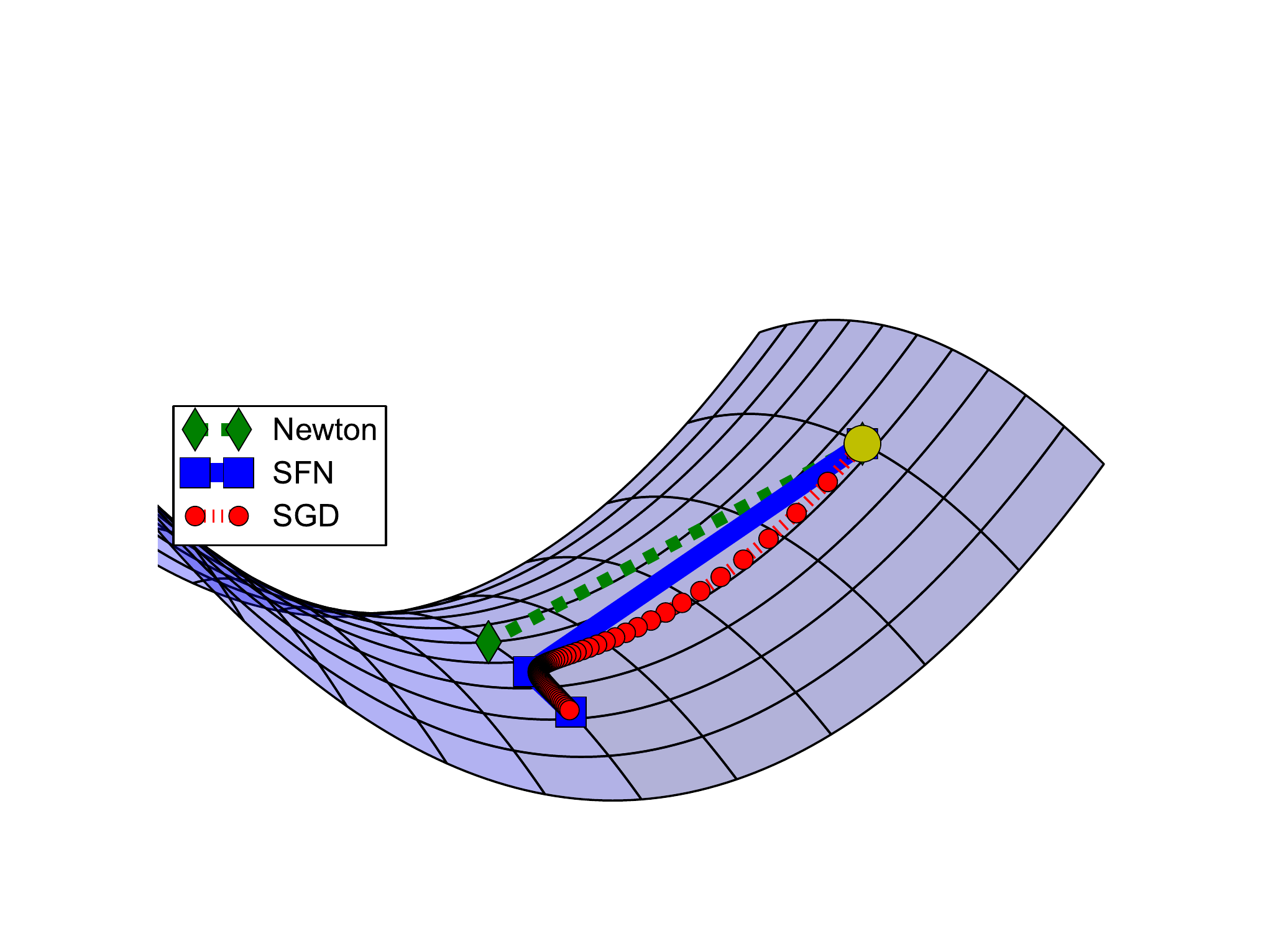}
        \end{minipage}
    \hfill
        \begin{minipage}{0.49\textwidth}
            \centering
            \includegraphics[width=1\textwidth,clip=true, trim=2.5cm 2.5cm 3cm 0cm]{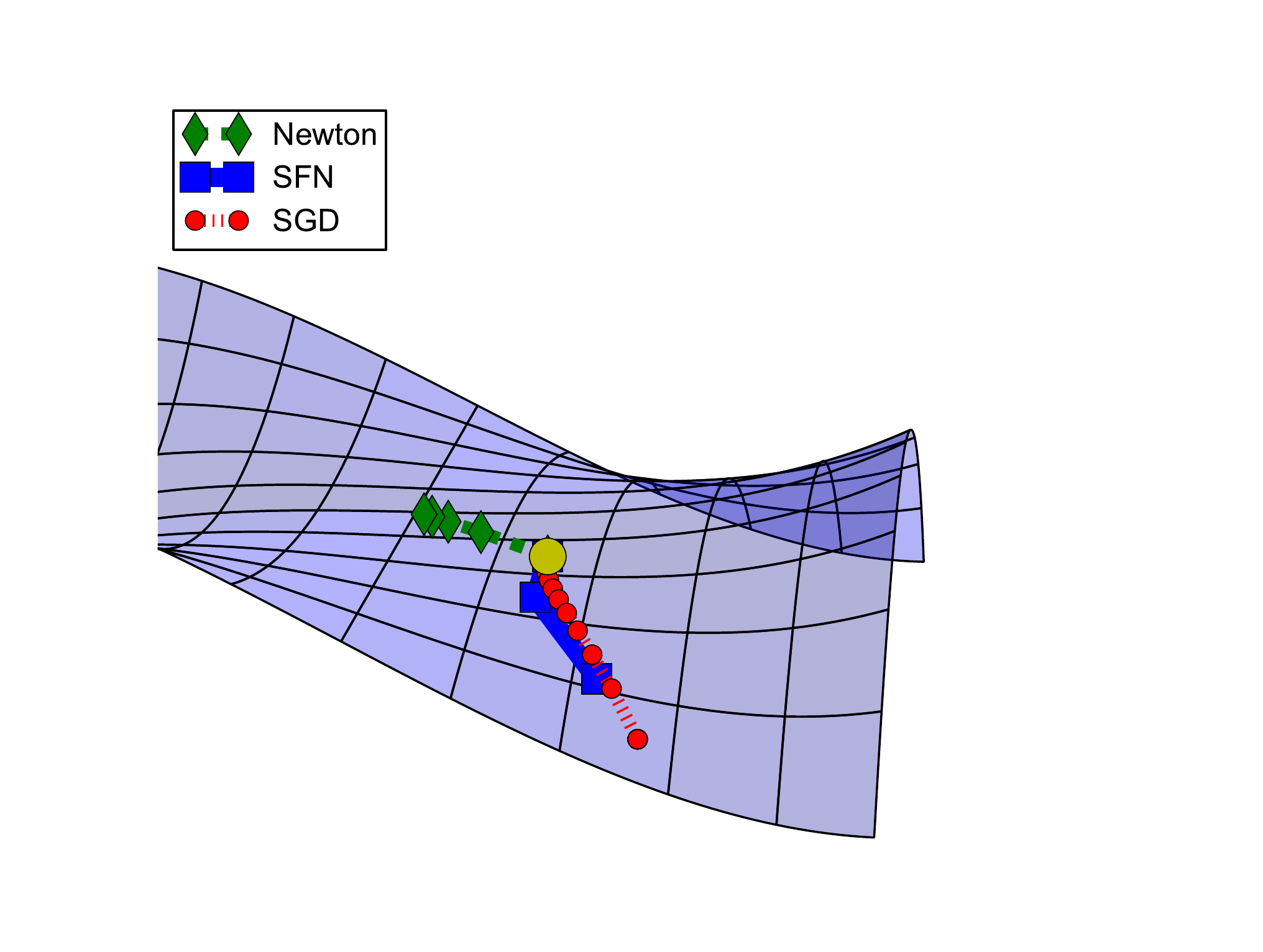}
        \end{minipage}

        \begin{minipage}{0.49\textwidth}
            \centering
	(a)
        \end{minipage}
        \hfill
        \begin{minipage}{0.49\textwidth}
            \centering
    (b)
        \end{minipage}
    \end{minipage}
    \hfill
    \begin{minipage}{0.35\textwidth}
        \caption{Behaviors of different optimization methods near a saddle point for (a) classical saddle structure \mbox{$5x^2 - y^2$}; (b) monkey saddle structure \mbox{$x^3 - 3xy^2$}. 
The yellow 
dot indicates the starting point. SFN stands for the saddle-free Newton method we proposed.
}
    \label{fig:saddle_methods} 
\end{minipage}
\end{figure}

The \emph{Newton method} solves the slowness problem by rescaling the
gradients in each direction with the inverse of the corresponding eigenvalue, yielding 
the step  $-\Delta \vv_i$.  However, this approach can
result in moving in the wrong direction. Specifically, if an eigenvalue is
negative, the Newton step moves along the eigenvector in a direction {\it opposite} 
to the gradient descent step, and thus moves in the direction of {\it increasing} 
error {\it towards} $\theta^*$. Since it also moves towards $\theta^*$ along eigen-directions
with positive eigenvalue, the saddle point $\theta^*$ becomes an \emph{attractor} for the
Newton method (see Fig.~\ref{fig:saddle_methods}).  This justifies using the Newton method 
to find critical points of any index in Fig. ~\ref{fig:index}.

A \emph{trust region} approach is a practical implementation of second order
methods for non-convex problems. In one such method, the Hessian is damped to remove negative
curvature by adding a constant $\alpha$ to its
diagonal, which is equivalent to adding $\alpha$ to each of its eigenvalues.
Rescaling the gradient by the inverse of the modified eigenvalues $\lambda\el i + \alpha$ yields the step $-
\left(\slantfrac{{\lambda\el i}}{{\lambda\el i} + \alpha}\right) \Delta \vv_i$.
To ensure descent along every eigen-direction, 
one must increase the damping coefficient $\alpha$ enough so that $\lambda\el{min} + \alpha > 0$
even for the most negative eigenvalue $\lambda\el{min}$.
Therefore, the drawback is again a potentially small step size in many eigen-directions incurred by 
a large damping factor $\alpha$. 

Besides damping, another approach to deal with negative curvature 
is to ignore them. This can be done regardless of the approximation strategy
used for the Newton method such as a truncated Newton method or a BFGS
approximation (see \citet{NumOptBook} chapters 4 and 7). However, such algorithms
cannot escape saddle points, as they ignore the very directions of negative curvature
that must be followed to achieve escape.

Natural gradient descent is a first order method that relies on the curvature of
the parameter manifold. That is, natural gradient descent takes a step that
induces a constant change in the behaviour of the model as measured by the
KL-divergence between the model before and after taking the step. The resulting
algorithm is similar to the Newton method, except that it relies on the Fisher
Information matrix $\Fbf$. 

It is argued by \citet{MagnusSA_98, Inoue03} that natural gradient descent can
address certain saddle point structures effectively.  Specifically, it can
resolve those saddle points arising from having units behaving very similarly.
\citet{Mizutani10}, however, argue that natural gradient descent also
suffers with negative curvature. One particular known issue is the
over-realizable regime, where around the stationary solution $\theta^*$, the
Fisher matrix is rank-deficient. Numerically, this means that the Gauss-Newton
direction can be orthogonal to the gradient at some distant point from
$\theta^*$~\citep{Mizutani10}, causing optimization to converge to some
non-stationary point. Another weakness is that 
the difference $\mathbf{S}$ between the Hessian and the Fisher Information Matrix
can be large near certain saddle points that exhibit strong negative
curvature. This means that the landscape close to these critical points may be
dominated by $\mathbf{S}$, meaning that the rescaling provided by $\Fbf^{-1}$ is
not optimal in all directions. 

The same is true for TONGA~\citep{RouxMB07}, an algorithm similar to natural
gradient descent. It uses the covariance of the gradients as the
rescaling factor. As these gradients vanish approaching a critical point, their
covariance will result in much larger steps than needed near critical points. 

\section{Generalized trust region methods}

In order to attack the saddle point problem, and overcome the deficiencies of the above methods, 
we will define a class of \emph{generalized trust region methods}, and search for an algorithm within
this space.  This class involves a straightforward extension of classical trust region methods via 
two simple changes: (1) We allow the minimization of a first-order Taylor expansion of the function instead of always
relying on a second-order Taylor expansion as is typically done in trust region
methods, and (2) we replace the constraint on the norm of the step $\Dtheta$ by a
constraint on the distance between $\theta$ and $\theta + \Dtheta$.  Thus the choice of
distance function and Taylor expansion order specifies an algorithm. If we define $\Ts_k(\LL, \theta,
\Dtheta)$ to indicate the $k$-th order Taylor series expansion of $\LL$ around
$\theta$ evaluated at $\theta + \Dtheta$, then we can summarize a generalized
trust region method as:
\begin{equation}
    \begin{aligned}
        \Delta \theta & = 
            \argmin_{\Delta \theta} 
                    \Ts_k\{\LL, \theta, \Delta\theta\} &
                    \text{ with } k \in \{1, 2\} \\
        & \text{s. t. } d(\theta, \theta + \Delta \theta) \leq \Delta. & 
    \end{aligned} 
    \label{eq:gtrm}
\end{equation}
For example, the $\alpha$-damped Newton method described above
arises as a special case with $k=2$ and $d(\theta, \theta + \Delta \theta) = || \Delta \theta ||_2^2$, where $\alpha$ 
is implicitly a function of $\Delta$.  

\begin{wrapfigure}{L}{0.5\textwidth}
    \vspace{-3mm}
    \begin{minipage}{0.5\textwidth}
        \begin{algorithm}[H]
    \caption{Approximate saddle-free Newton}
    \label{alg:sfn}
    \begin{algorithmic}
        \REQUIRE Function $f(\theta)$ to minimize
        
        \FOR{$i = 1 \to M$}
        \STATE ${\bf \mathbf{V}} \gets \text{$k$ Lanczos vectors of $\frac{\partial^2
        f}{\partial \theta^2}$}$
	\STATE $\hat{f}(\alpha) \gets g(\theta + \mathbf{V}\alpha)$
        \STATE $|\hat{\bf H}| \gets \left|\frac{\partial^2   \hat{f}}{\partial \alpha^2}\right|$ by using an eigen decomposition of $\hat{\bf H}$

        \FOR{$j = 1 \to m$}
        \STATE $\vg \gets -\frac{\partial \hat{f}}{\partial \alpha}$
        \STATE $\lambda \gets \argmin_\lambda \hat{f}(\vg(|\hat{\bf H}| + \lambda {\bf I})^{-1})$
        \STATE $\theta \gets \theta + \vg(|\hat{\bf H}| + \lambda {\bf I})^{-1}{\bf V}$
        \ENDFOR
        
        \ENDFOR
    \end{algorithmic}
    \end{algorithm}
\end{minipage}
\end{wrapfigure}

\section{Attacking the saddle point problem}
\label{sec:addressing}

We now search for a solution to the saddle-point problem within the family of generalized 
trust region methods.  In particular, the analysis of optimization algorithms near 
saddle points discussed in Sec.~\ref{sec:optim_dynam} suggests a simple heuristic solution: rescale 
the gradient along each eigen-direction $\es\el i$ by $\slantfrac{1}{|\lambda\el i|}$.  This achieves the same
optimal rescaling as the Newton method, while preserving the sign of the
gradient, thereby turning saddle points into repellers, not attractors, of the learning
dynamics. The idea of taking the absolute value of the eigenvalues of the
Hessian was suggested before. See, for example, \citep[chapter
3.4]{NumOptBook} or \citet[chapter 4.1]{Murray10}. However, we are not
aware of any proper justification of this algorithm or even a 
detailed exploration (empirical or otherwise) of this idea. One cannot simply 
replace $\hess$ by $|\hess|$, where $|\hess|$ is the matrix
obtained by taking the absolute value of each eigenvalue of $\hess$, without
proper justification. For instance, one obvious question arises: are we still
optimizing the same function?  While we might be able to argue that this heuristic 
modification does the
right thing near critical points, is it still the right thing far away from the
critical points?  Here we show this heuristic solution arises naturally from our generalized trust region approach.

Unlike classical trust region approaches, we consider minimizing a first-order Taylor expansion 
of the loss ($k=1$ in Eq.~\eqref{eq:gtrm}).  This means
that the curvature information has to come from the constraint by picking a
suitable distance measure $d$ (see Eq.~\eqref{eq:gtrm}). Since the minimum of the first order approximation of $\LL$ is at infinity, we
know that this optimization dynamics will always jump to
the border of the trust region. So we must ask how far from $\theta$ can we trust the first order approximation
of $\LL$? One answer is to bound the discrepancy between the first and second order Taylor expansions of $\LL$ by imposing the following constraint:
\begin{equation}
\label{eq:constr}
d(\theta, \theta + \Delta \theta) = \lnorm \LL(\theta) + \nabla \LL \Dtheta + 
\frac{1}{2}\Dtheta^\top \hess \Dtheta - \LL(\theta) - \nabla \LL \Dtheta \rnorm =
\frac{1}{2}\lnorm \Dtheta^\top \hess \Dtheta\rnorm \leq \Delta,
\end{equation}

where $\nabla \LL$ is the partial derivative of $\LL$ with respect to $\theta$
and $\Delta \in \RR$ is some small value that indicates how much
discrepancy we are willing to accept. Note that the distance measure $d$ takes
into account the curvature of the function. 

Eq.~\eqref{eq:constr} is not easy to solve for $\Dtheta$ in more than
one dimension. Alternatively, one could take the square of the distance, but this would yield 
an optimization problem with a constraint that is quartic in $\Dtheta$, and therefore also difficult to solve.  We circumvent
these difficulties through a Lemma:

\begin{samepage}
\begin{lemma}
\label{lemma:constraint}
Let $\Abf$ be a nonsingular square matrix in $\RR^{n} \times \RR^{n}$, and
$\example \in \RR^n$ be some vector. Then it holds that $|\example^\top \Abf
\example | \leq \example^\top |\Abf| \example$, where $|\Abf|$ is the matrix
obtained by taking the absolute value of each of the eigenvalues of $\Abf$.
\end{lemma}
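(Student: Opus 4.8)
The plan is to reduce the claimed inequality to a one-line application of the triangle inequality by passing to the eigenbasis of $\Abf$. Although the statement only assumes $\Abf$ nonsingular, the construction $|\Abf|$ (taking the absolute value of each eigenvalue) is meant for the symmetric Hessian of our application, so I would invoke the spectral theorem: write $\Abf = \Ubf \Lambda \Ubf^\top$, where $\Ubf$ is orthogonal with the eigenvectors $\es\el{1}, \dots, \es\el{n}$ as columns and $\Lambda = \operatorname{diag}(\lambda\el{1}, \dots, \lambda\el{n})$. By definition $|\Abf| = \Ubf\, |\Lambda|\, \Ubf^\top$ with $|\Lambda| = \operatorname{diag}(|\lambda\el{1}|, \dots, |\lambda\el{n}|)$.

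First I would change coordinates, setting $\mathbf{y} = \Ubf^\top \example$ so that $\example = \Ubf \mathbf{y}$; because $\Ubf$ is orthogonal this change of variables is an isometry and leaves the quadratic form intact. Substituting yields the two diagonal quadratic forms $\example^\top \Abf \example = \sum_{i} \lambda\el{i}\, y_i^2$ and $\example^\top |\Abf| \example = \sum_i |\lambda\el{i}|\, y_i^2$. Next, because each weight $y_i^2 \ge 0$ is nonnegative, the triangle inequality gives $\bigl|\sum_i \lambda\el{i} y_i^2\bigr| \le \sum_i |\lambda\el{i}| y_i^2$, which is exactly $|\example^\top \Abf \example| \le \example^\top |\Abf| \example$. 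This completes the argument; the inequality is saturated precisely when all $\lambda\el{i}$ weighted by nonzero $y_i^2$ share the same sign.

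The only real subtlety — and the step I would be most careful about — is the reliance on symmetry rather than mere nonsingularity. For a symmetric $\Abf$ the spectral theorem supplies an orthonormal eigenbasis, which is what makes the coordinate change $\mathbf{y} = \Ubf^\top \example$ an isometry and, crucially, guarantees the real, nonnegative weights $y_i^2$ needed for the triangle inequality. For a general nonsingular matrix the eigenvectors need not be orthogonal and the eigenvalues may be complex, so both the definition of $|\Abf|$ and the reduction above would break down; I would therefore state explicitly that the lemma is invoked with $\Abf = \hess$ symmetric. I would also remark that nonsingularity is not actually used in establishing the bound itself — it is needed only afterwards, to ensure $|\Abf|$ is invertible for the subsequent rescaling of the gradient.
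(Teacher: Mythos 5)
Your proof is correct and follows essentially the same route as the paper's: expand $\example$ in the orthonormal eigenbasis of $\Abf$ so that both quadratic forms diagonalize, then apply the triangle inequality to $\bigl|\sum_i \lambda\el{i} y_i^2\bigr|$ with the nonnegative weights $y_i^2$. Your added caveat is well taken — the paper's own proof also implicitly assumes an orthonormal eigenbasis (it uses $\sum_i \es\el{i}{\es\el{i}}^\top = \mathbf{I}$), so the lemma really does require $\Abf$ symmetric (or at least normal) rather than merely nonsingular, which is harmless in the application since $\Abf = \hess$ is a Hessian.
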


\begin{proof}
\iftoggle{arxiv}{
    See Appendix~\ref{sec:apx_proof} for the proof.
}{
    See the supplementary material for the proof.
}
\end{proof}
\end{samepage}

Instead of the originally proposed distance measure in Eq.~\eqref{eq:constr}, we approximate the distance
by its upper bound $\Dtheta |\hess| \Dtheta$ based on
Lemma~\ref{lemma:constraint}. This results in the following generalized trust
region method:
\begin{equation}
    \begin{aligned}
        \Delta \theta & = 
            \argmin_{\Delta \theta} 
                    \LL(\theta) + \nabla \LL \Delta \theta \\
        & \text{s. t. } \Delta \theta^\top |\hess|\Delta \theta \leq \Delta .
    \end{aligned} 
\end{equation}

Note that as discussed before, we can replace the inequality constraint with an
equality one, as the first order approximation of $\LL$ has a minimum at
infinity and the algorithm always jumps to the border of the trust region.
Similar to \citep{Pascanu+Bengio-ICLR2014}, we use Lagrange multipliers to
obtain the solution of this constrained optimization. This gives (up to a scalar
that we fold into the learning rate) a step of the form:
\begin{equation}
\Delta \theta = -\nabla \LL |\hess|^{-1}
\end{equation}

\begin{figure}[t]
    \centering
    \begin{minipage}{0.04\textwidth}
	\centering 
        \begin{sideways}
            MNIST
        \end{sideways}
    \end{minipage}
    \hfill
    \begin{minipage}{0.95\textwidth}
        \begin{minipage}{0.32\textwidth}
            \centering
            \includegraphics[width=1.\columnwidth, clip=true, trim=0cm 0cm 0cm 0cm]{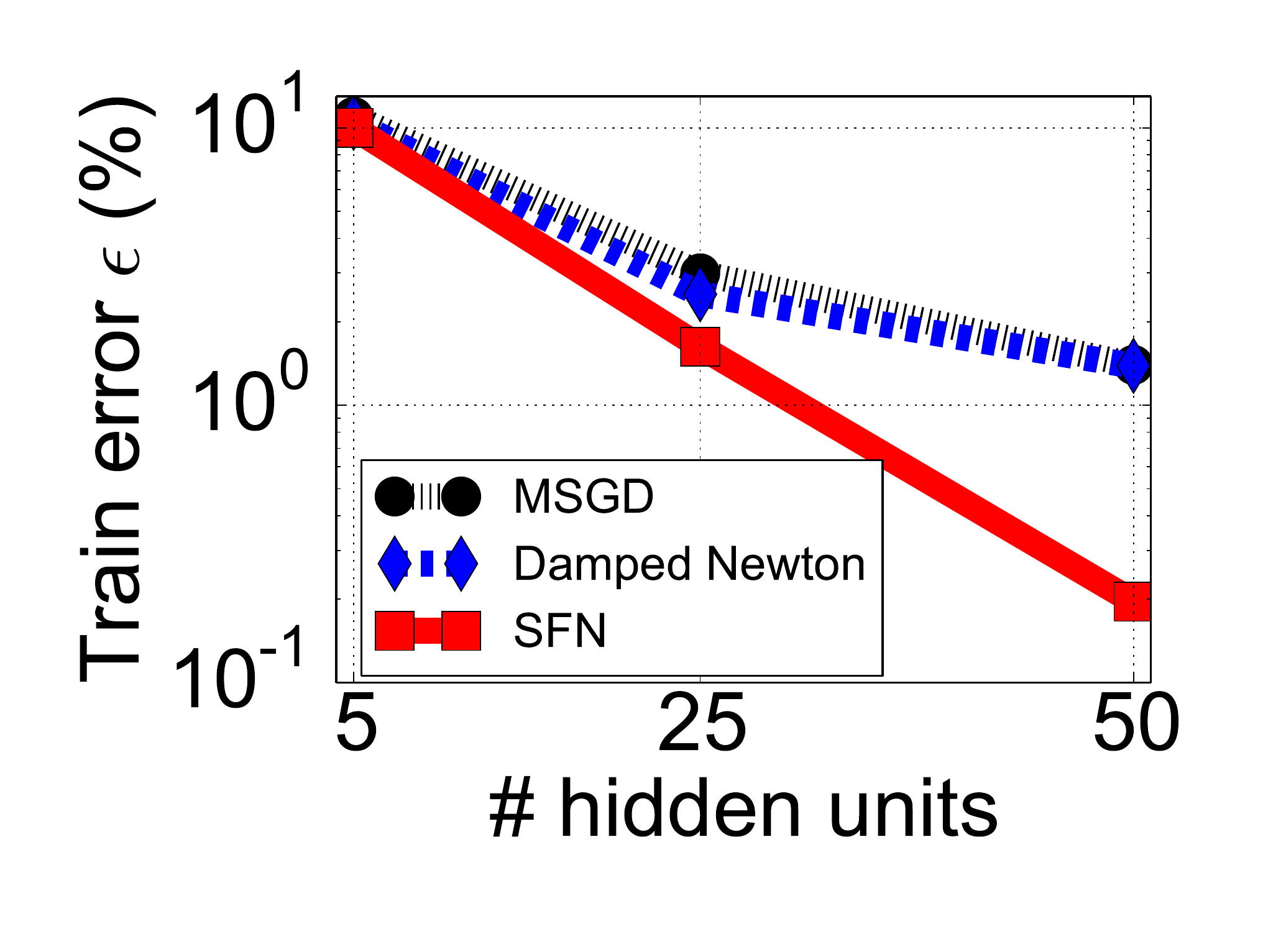}
        \end{minipage}
        \hfill
        \begin{minipage}{0.32\textwidth}
            \centering
            \includegraphics[width=1.\columnwidth, clip=true, trim=0cm 0cm 0cm 0cm]{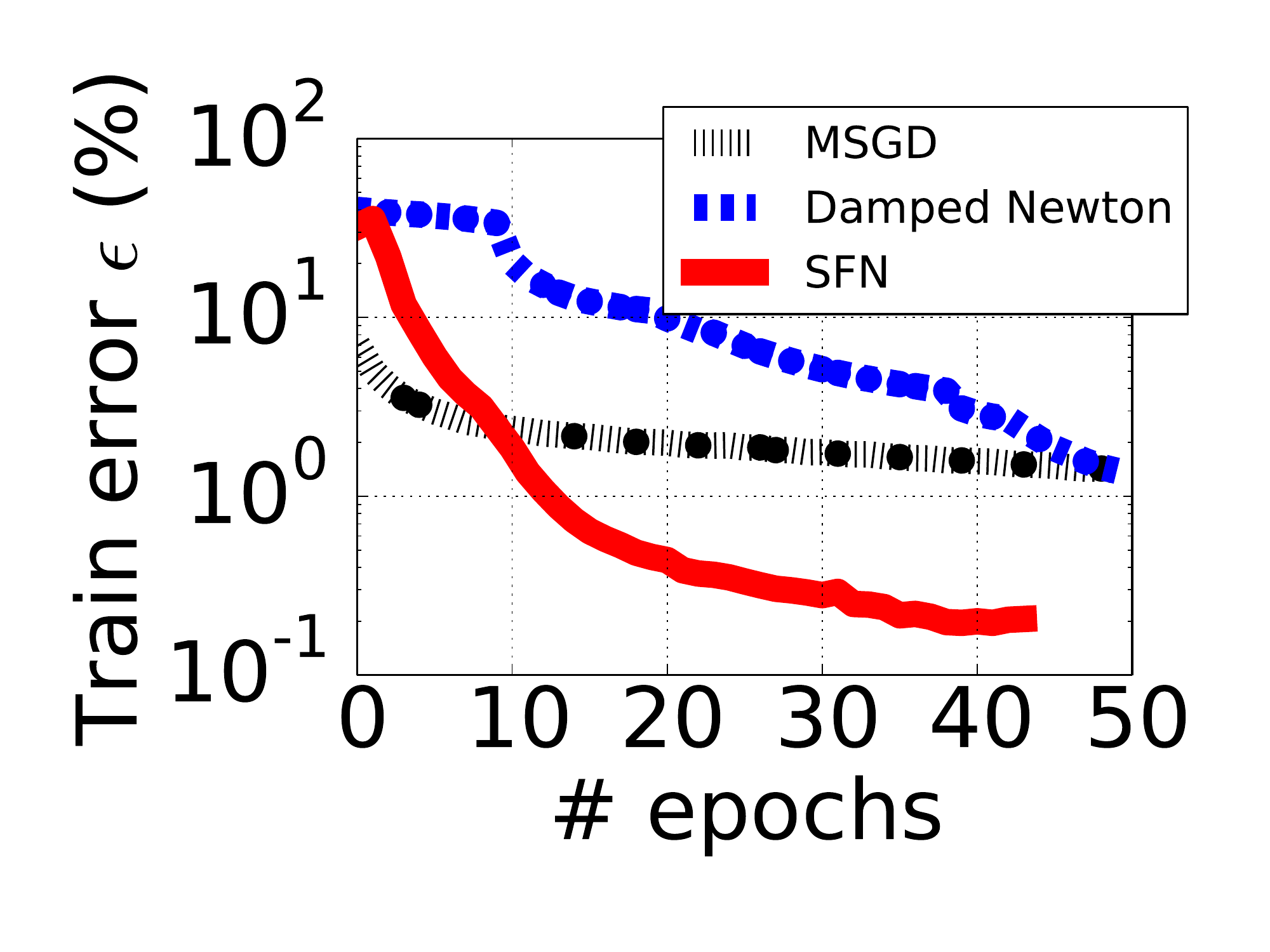}
        \end{minipage}
        \begin{minipage}{0.32\textwidth}
            \centering
            \includegraphics[width=1.\columnwidth, clip=true, trim=0cm 0cm 0cm 0cm]{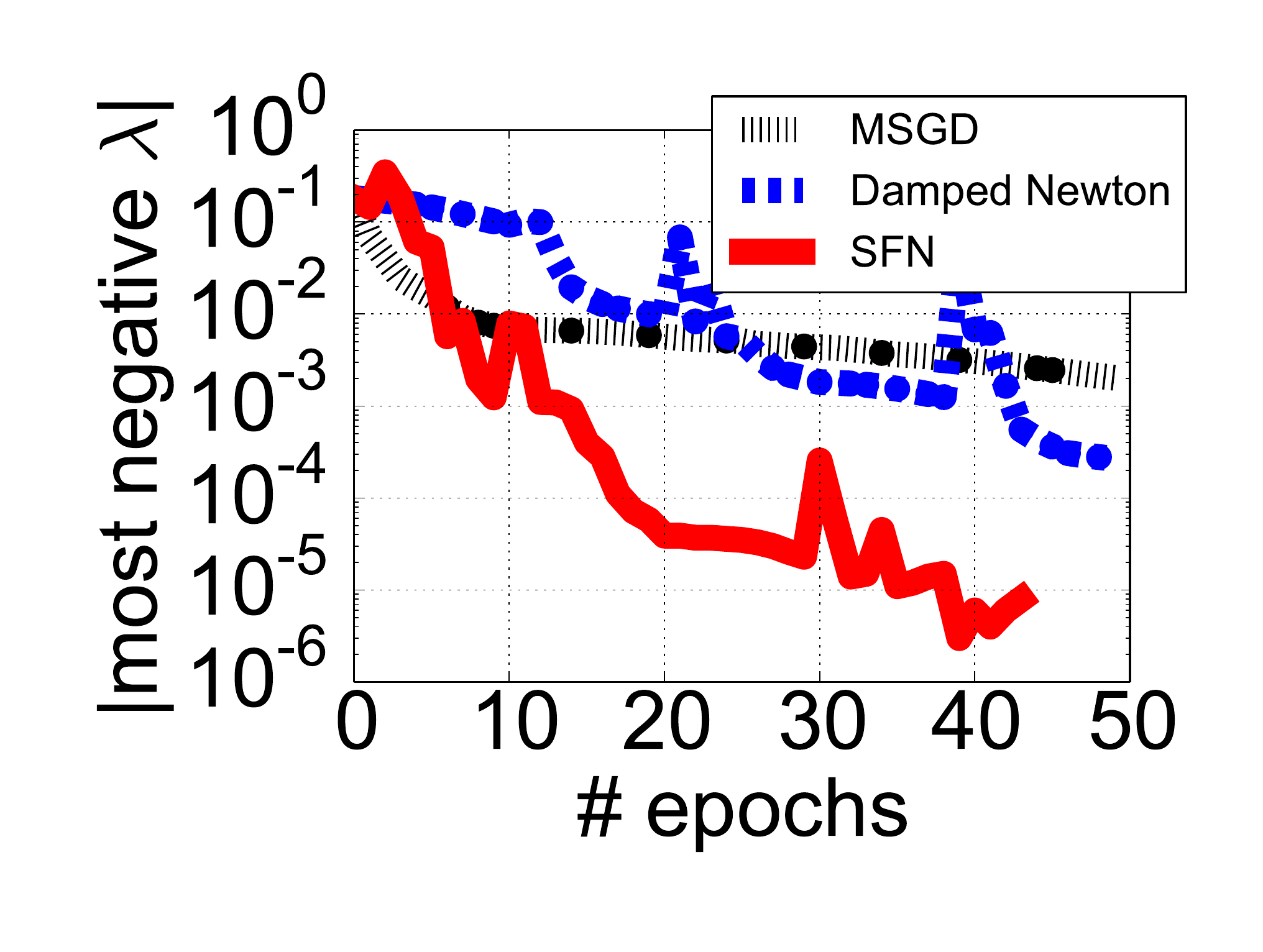}
        \end{minipage}

        \begin{minipage}{0.32\textwidth}
            \centering
            (a) 
        \end{minipage}
        \hfill
        \begin{minipage}{0.32\textwidth}
            \centering
            (b) 
        \end{minipage}
        \begin{minipage}{0.32\textwidth}
            \centering
            (c) 
        \end{minipage}
    \end{minipage}

    \begin{minipage}{0.04\textwidth}
        \centering
        \begin{sideways}
            CIFAR-10
        \end{sideways}
    \end{minipage}
    \hfill
    \begin{minipage}{0.95\textwidth}

        \begin{minipage}{0.32\textwidth}
            \centering
            \includegraphics[width=1.\columnwidth, clip=true, trim=0cm 0cm 0cm 0cm]{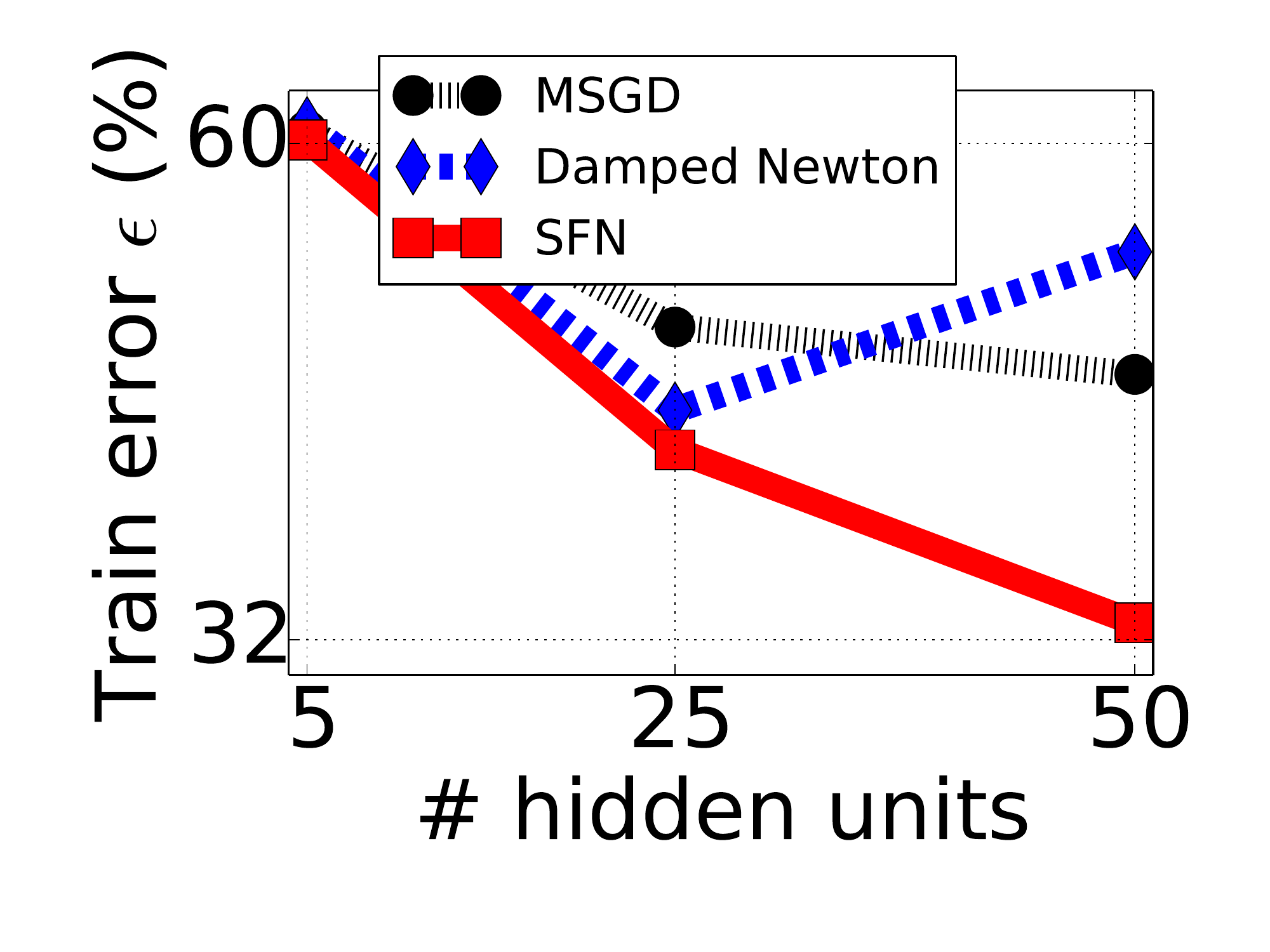}
        \end{minipage}
        \hfill
        \begin{minipage}{0.32\textwidth}
            \centering
            \includegraphics[width=1.\columnwidth, clip=true, trim=0cm 0cm 0cm 0cm]{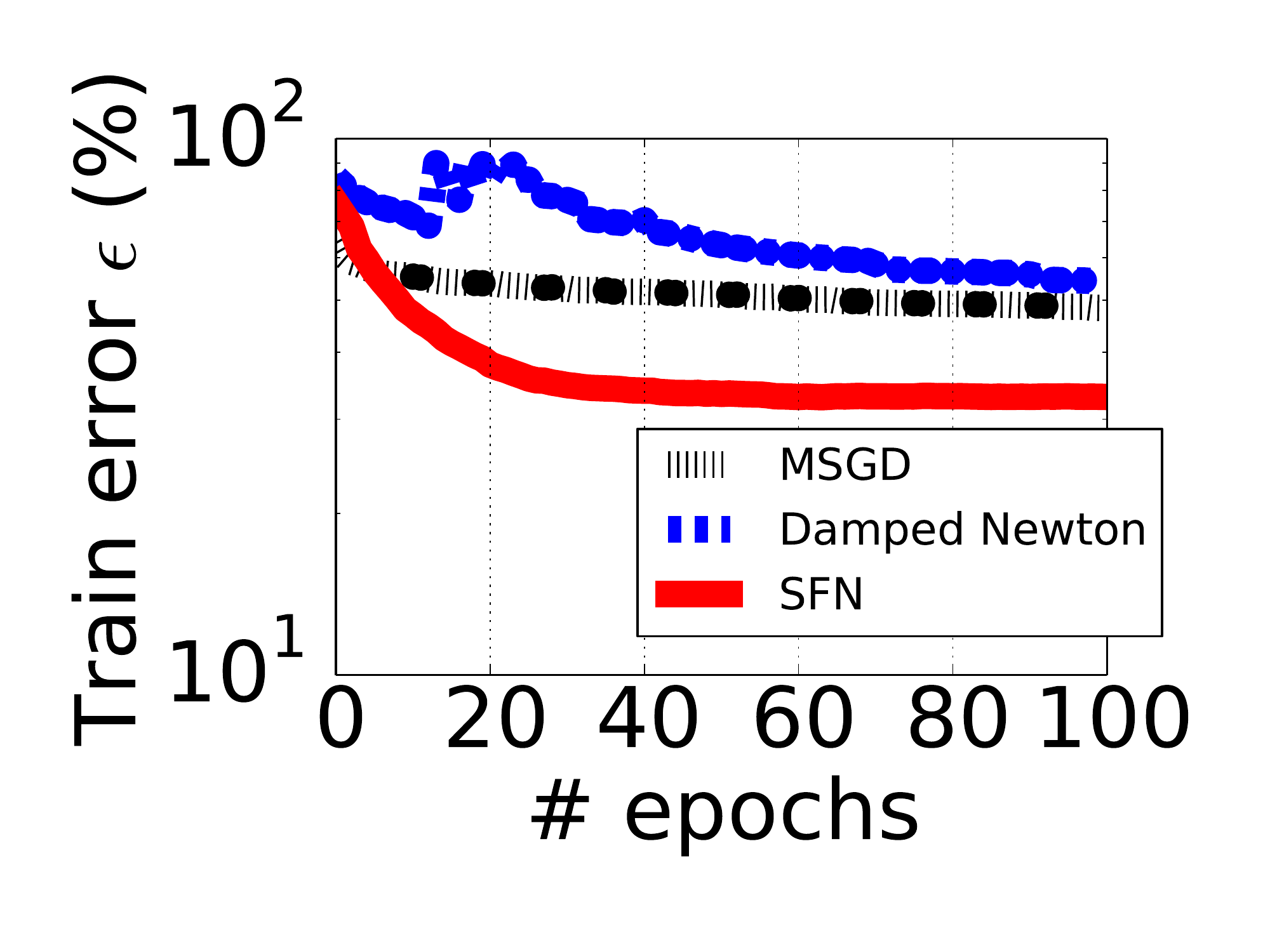}
        \end{minipage}
        \begin{minipage}{0.32\textwidth}
            \centering
            \includegraphics[width=1.\columnwidth, clip=true, trim=0cm 0cm 0cm 0cm]{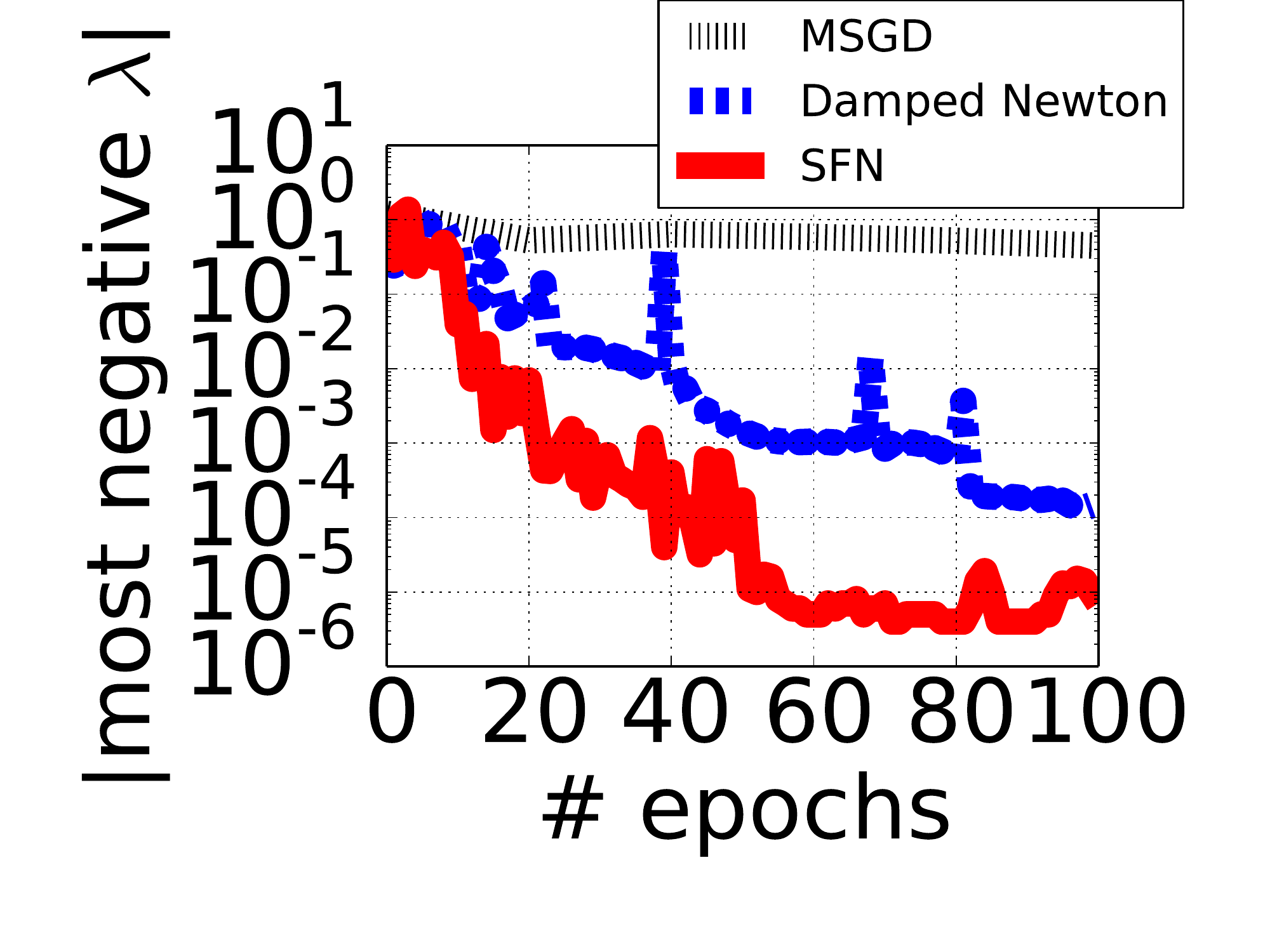}
        \end{minipage}

        \begin{minipage}{0.32\textwidth}
            \centering
            (d) 
        \end{minipage}
        \hfill
        \begin{minipage}{0.32\textwidth}
            \centering
            (e) 
        \end{minipage}
        \begin{minipage}{0.32\textwidth}
            \centering
            (f)
        \end{minipage}
    \end{minipage}
\caption{
Empirical evaluation of different optimization algorithms for a single-layer
MLP trained on the rescaled MNIST and CIFAR-10 dataset. In (a) and (d) we look at
the minimum error obtained by the different algorithms considered as a function of the model
size. (b) and (e) show the optimal training curves for the three algorithms. The error
is plotted as a function of the number of epochs. (c) and (f) track the norm of the largest 
negative eigenvalue. 
}
\label{fig:results}
    \vspace{-3mm}
\end{figure}

This algorithm, which \razvan{we call the saddle-free Newton method (SFN)}, leverages curvature information in a fundamentally different way,
to define the shape of the trust region, rather than Taylor expansion to second
order, as in classical methods.  Unlike gradient descent, it can move further
(less) in the directions of low (high) curvature.  It is identical to the Newton
method when the Hessian is positive definite, but unlike the Newton method, it
can escape saddle points.  Furthermore, unlike gradient descent, the escape is
rapid even along directions of weak negative curvature (see
Fig.~\ref{fig:saddle_methods}).

The exact implementation of this algorithm is intractable in a high dimensional
problem, because it requires the exact computation of the Hessian. Instead we use an approach
similar to Krylov subspace descent~\citep{VinyalsAISTATS12}. 
We optimize that function in a lower-dimensional Krylov subspace
$\hat{f}(\alpha) = f(\theta + \alpha {\bf V})$.  The $k$ Krylov subspace vectors
${\bf V}$ are found through Lanczos iteration of the Hessian. These vectors will
span the $k$ biggest eigenvectors of the Hessian with high-probability. This
reparametrization through $\alpha$ greatly reduces the dimensionality and allows
us to use exact saddle-free Newton in the subspace.\footnote{
    In the Krylov subspace, $\frac{\partial \hat{f}}{\partial
    \alpha} = {\bf V} \left(\frac{\partial f}{\partial \theta}\right)^\top$ and 
    $\frac{\partial^2 \hat{f}}{\partial \alpha^2} = {\bf V} \left(\frac{\partial^2
    f}{\partial \theta^2}\right) {\bf V}^\top$.
} See Alg.~\ref{alg:sfn} for the pseudocode.

\section{Experimental validation of the saddle-free Newton method}

In this section, we empirically evaluate the theory suggesting the existence of
many saddle points in high-dimensional functions by training neural
networks.

\subsection{Feedforward Neural Networks}

\subsubsection{Existence of Saddle Points in Neural Networks}

In this section, we validate the existence of saddle points in the cost function of neural
networks, and see how each of the algorithms we described earlier behaves near
them. In order to minimize the effect of any type of approximation used in the
algorithms, we train small neural networks on the scaled-down version of MNIST
and CIFAR-10, where we can compute the update directions by each algorithm
exactly. Both MNIST and CIFAR-10 were downsampled to be of size $10 \times 10$.

We compare minibatch stochastic gradient descent (MSGD), damped Newton and the proposed
saddle-free Newton method (SFN). The hyperparameters of SGD were selected via random
search~\citep{Bergstra+Bengio-2012-small}, and the damping coefficients for the
damped Newton and saddle-free Newton\footnote{Damping is used for numerical stability.} methods were selected from a small set at
each update. 

The theory suggests that the number of saddle points increases exponentially as
the dimensionality of the function increases. From this, we expect that it
becomes more likely for the conventional algorithms such as SGD and Newton methods to
stop near saddle points, resulting in worse performance (on training samples).
Figs.~\ref{fig:results}~(a) and (d) clearly confirm this. With the
smallest network, all the algorithms perform comparably, but as the size grows,
the saddle-free Newton algorithm outperforms the others by a large margin.

A closer look into the different behavior of each algorithm is presented in
Figs.~\ref{fig:results}~(b) and (e) which show the evolution of training error
over optimization. We can see that the proposed saddle-free Newton escapes, or
does not get stuck at all, near a saddle point where both SGD and Newton
methods appear trapped. Especially, at the 10-th epoch in the case of
MNIST, we can observe the saddle-free Newton method rapidly escaping from the
saddle point. Furthermore, Figs.~\ref{fig:results}~(c) and (f) provide evidence that the
distribution of eigenvalues shifts more toward the right as error decreases for all algorithms,
consistent with the theory of random error functions.  The distribution shifts more for SFN,
suggesting it can successfully avoid saddle-points on intermediary error (and large index).

\subsubsection{Effectiveness of saddle-free Newton Method in Deep Neural Networks}

\begin{figure}[t]
\centering
\begin{minipage}{0.49\textwidth}
    \centering
    Deep Autoencoder

\begin{minipage}{0.49\textwidth}
    \centering
    \includegraphics[width=\textwidth,clip=true, trim=0cm 0cm 0cm 0cm]{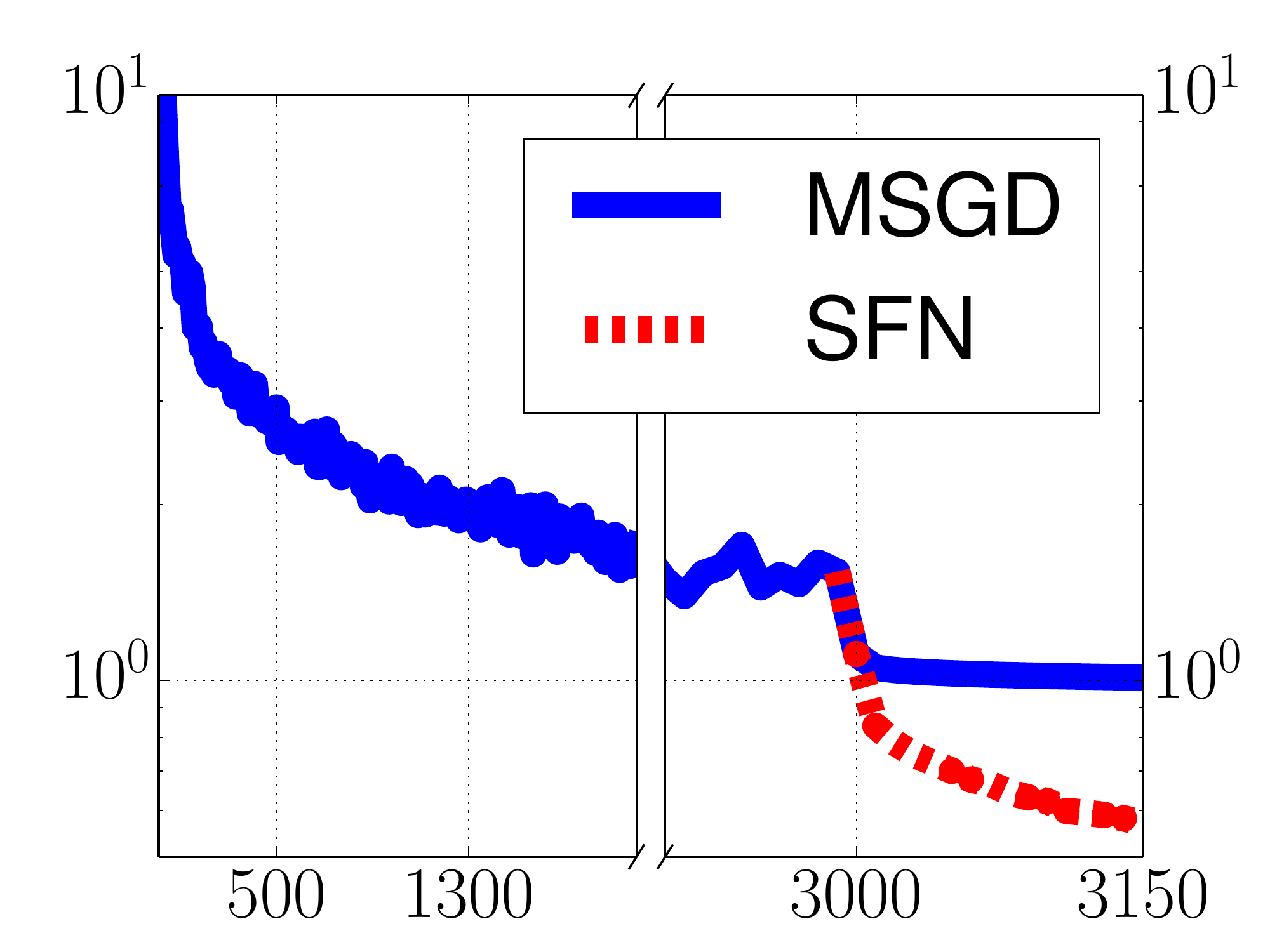}

    (a)
\end{minipage}
\hfill
\begin{minipage}{0.49\textwidth}
    \centering
    \includegraphics[width=\textwidth,clip=true, trim=0cm 0cm 0cm 0cm]{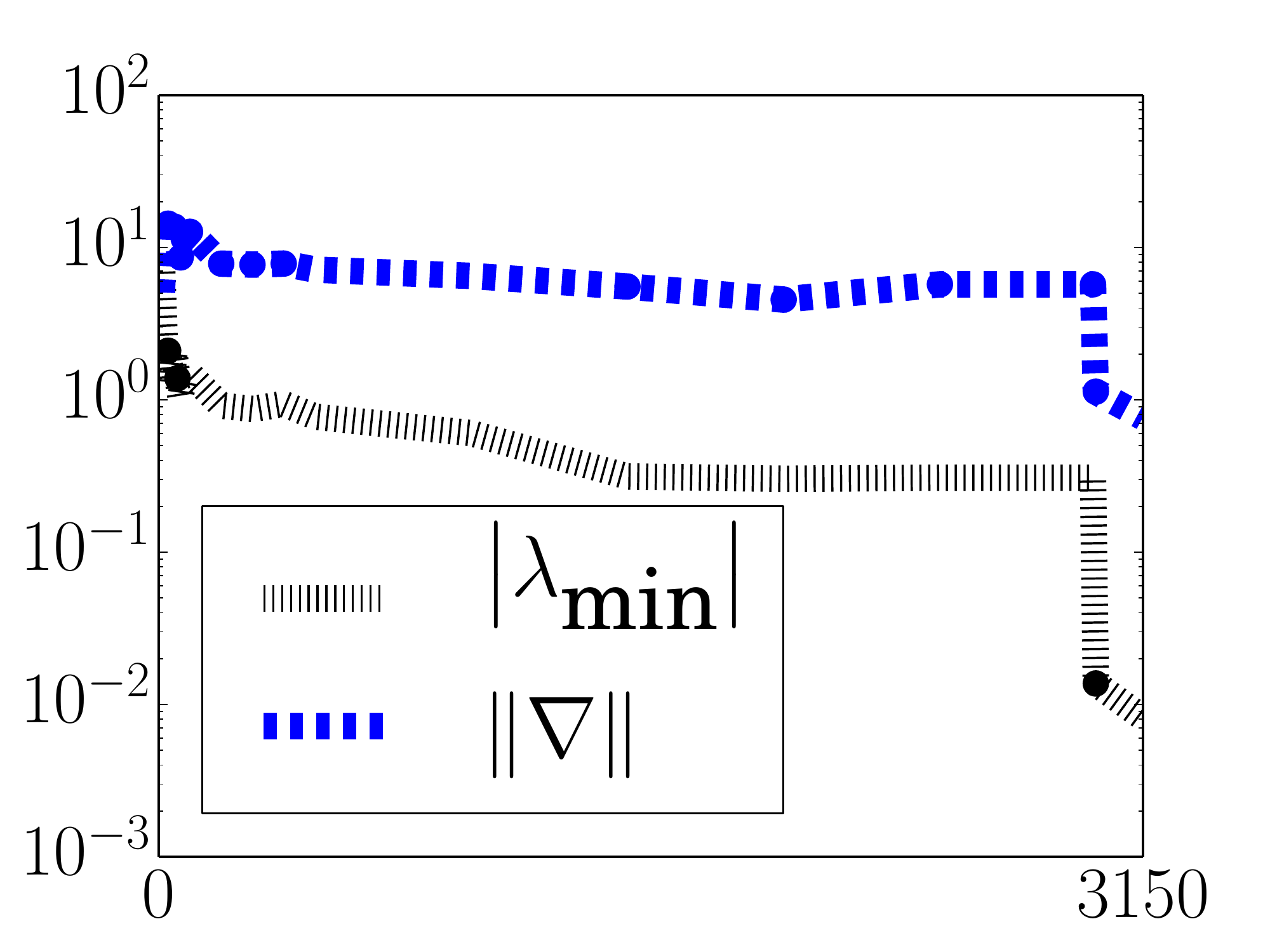}

    (b)
\end{minipage}
\end{minipage}
\hfill
\begin{minipage}{0.49\textwidth}
    \centering
    Recurrent Neural Network

\begin{minipage}{0.49\textwidth}
    \centering
    \includegraphics[width=\textwidth,clip=true, trim=0cm 0cm 0cm
    0cm]{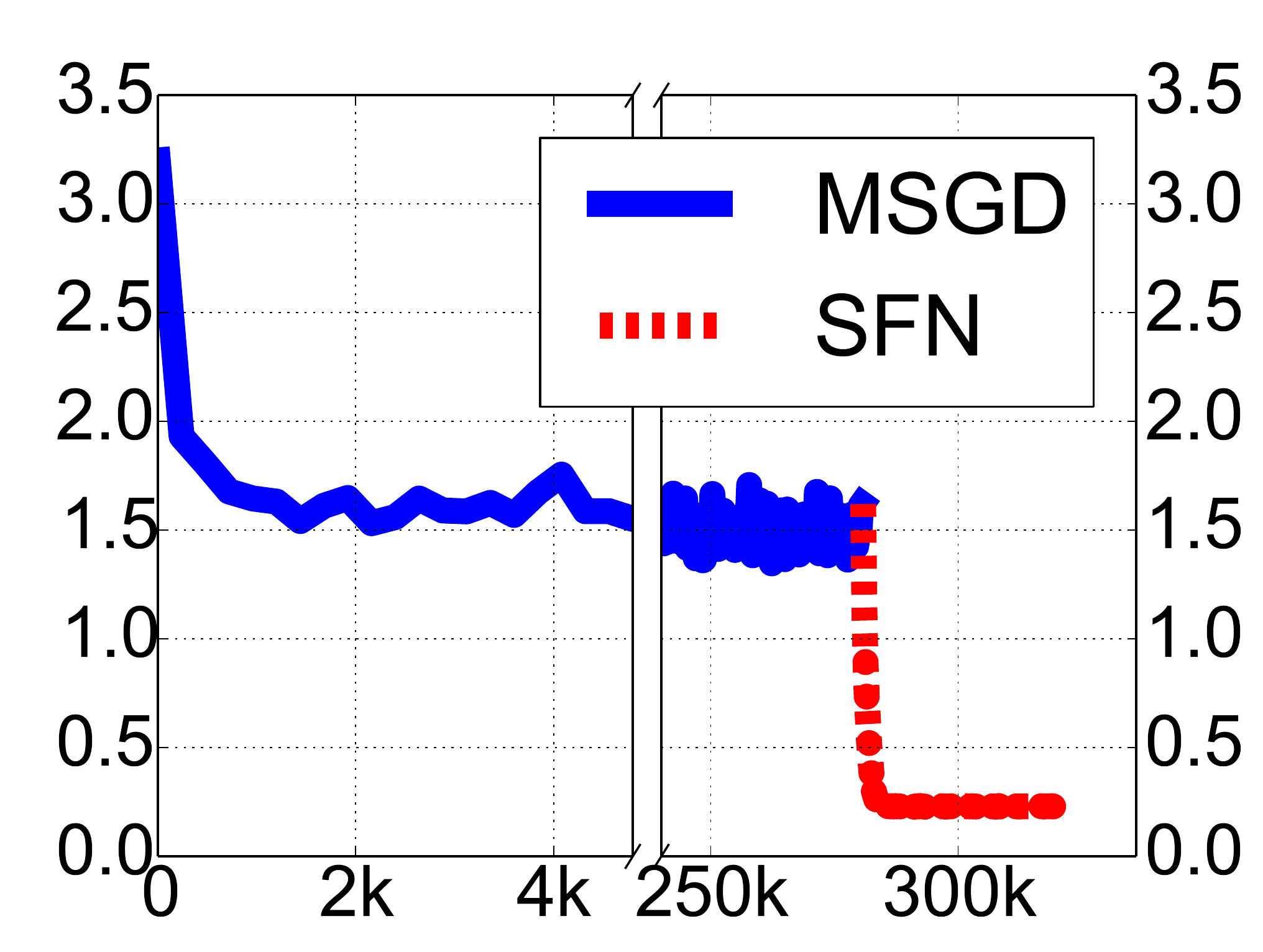}

    (c)
\end{minipage}
\hfill
\begin{minipage}{0.49\textwidth}
    \centering
    \includegraphics[width=\textwidth,clip=true, trim=0cm 0cm 0cm 0cm]{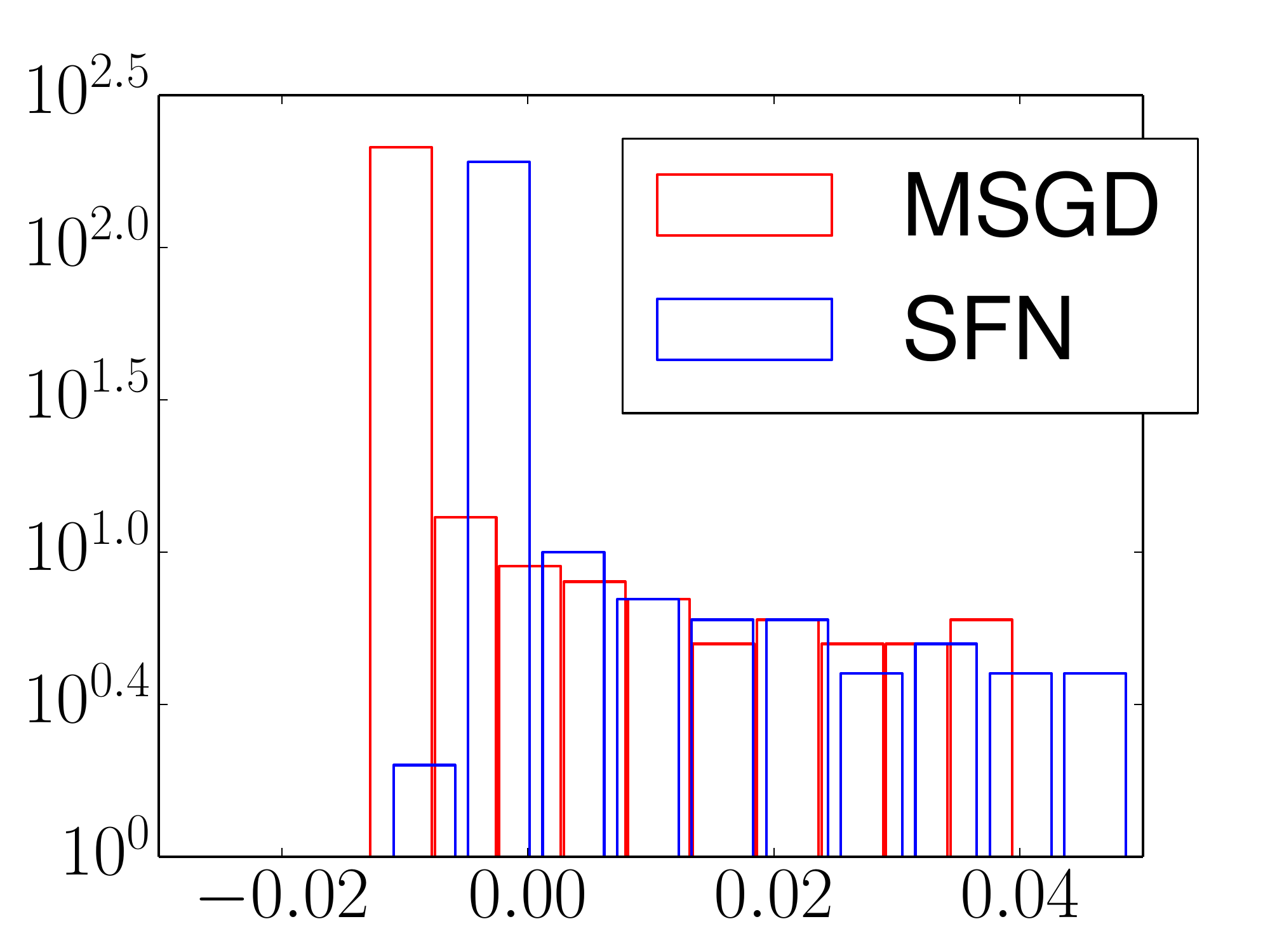}

    (d)
\end{minipage}
\end{minipage}

    \caption{
        Empirical results on training deep autoencoders on MNIST and recurrent
        neural network on Penn Treebank. (a) and (c): The learning curve for SGD and SGD
        followed by saddle-free Newton method. (b) The evolution of the
        magnitude of the most negative eigenvalue and the norm of the gradients
        w.r.t. the number of epochs (deep autoencoder). (d) The distribution of
        eigenvalues of the RNN solutions found by SGD and the SGD continued with
        saddle-free Newton method.
    } 
    \label{fig:sfn} 

    \vspace{-3mm}
\end{figure}

Here, we further show the effectiveness of the proposed saddle-free Newton
method in a larger neural network having seven hidden layers. The neural
network is a deep autoencoder trained on (full-scale) MNIST and considered a
standard benchmark problem for assessing the performance of optimization
algorithms on neural networks~\citep{SutskeverMartensDahlHinton_icml2013}. In
this large-scale problem, we used the Krylov subspace descent approach described
earlier with 500 subspace vectors.

We first trained the model with SGD and observed that learning stalls after
achieving the mean-squared error (MSE) of 1.0. We then continued with the
saddle-free Newton method which rapidly escaped the (approximate) plateau at
which SGD was stuck (See Fig.~\ref{fig:sfn} (a)). Furthermore, even in these
large scale experiments, we were able to confirm that the distribution of Hessian
eigenvalues shifts right as error decreases, and that the proposed saddle-free Newton
algorithm accelerates this shift (See Fig.~\ref{fig:sfn} (b)).

The model trained with SGD followed by the saddle-free Newton method was able to
get the state-of-the-art MSE of $0.57$ compared to the previous best error of
$0.69$ achieved by the Hessian-Free method~\citep{Martens10}. Saddle free Newton method 
does better.  

\subsection{Recurrent Neural Networks: Hard Optimization Problem}

\cho{Recurrent neural networks are widely known to be more difficult to train
    than feedforward neural networks~\citep[see, e.g.,][]{Bengio_trnn94, Pascanu+al-ICML2013-small}}.
    In practice they tend to underfit, and in this section, we want to test 
if the proposed saddle-free Newton method can help avoiding underfitting, assuming that 
that it is caused by saddle points.
We trained a small recurrent neural network having 120 hidden units for the task
of character-level language modeling on Penn Treebank corpus. Similarly to the
previous experiment, we trained the model with SGD until it was clear
that the learning stalled. From there on, training continued with the
saddle-free Newton method. 

In Fig.~\ref{fig:sfn} (c), we see a trend similar to what we observed with the
previous experiments using feedforward neural networks. The SGD stops
progressing quickly and does not improve performance, suggesting that the
algorithm is stuck in a plateau, possibly around a saddle point. As soon as we
apply the proposed saddle-free Newton method, we see that the error drops
significantly.  Furthermore, Fig.~\ref{fig:sfn} (d) clearly shows that the
solution found by the saddle-free Newton has fewer negative eigenvalues,
consistent with the theory of random Gaussian error functions. In addition to the
saddle-free Newton method, we also tried continuing with the truncated Newton
method with damping to see whether if it can improve SGD where it got stuck, but without 
much success.

\section{Conclusion}

In summary, we have drawn from disparate literatures spanning statistical
physics and random matrix theory to neural network theory, to argue that (a)
non-convex error surfaces in high dimensional spaces generically suffer from a
proliferation of saddle points, and (b) in contrast to conventional wisdom
derived from low dimensional intuition, local minima with high error are
exponentially rare in high dimensions.  Moreover, we have provided the first
experimental tests of these theories by performing new measurements of the
statistical properties of critical points in neural network error surfaces.
These tests were enabled by a novel application of Newton's method to search for
critical points of any index (fraction of negative eigenvalues), and they
confirmed the main qualitative prediction of theory that the index of a critical
point tightly and positively correlates with its error level. 

Motivated by this theory, we developed a framework of generalized trust region
methods to search for algorithms that can rapidly escape saddle points.  This
framework allows us to leverage curvature information in a fundamentally
different way than classical methods, by defining the shape of the trust region,
rather than locally approximating the function to second order.  Through further
approximations, we derived an exceedingly simple algorithm, the saddle-free
Newton method, which rescales gradients by the absolute value of the inverse
Hessian.  This algorithm had previously remained heuristic and theoretically
unjustified, as well as numerically unexplored within the context of deep and
recurrent neural networks.   Our work shows that near saddle points it can
achieve rapid escape by combining the best of gradient descent and Newton
methods while avoiding the pitfalls of both. Moreover, through our generalized trust
region approach, our work shows that this algorithm is sensible even far from
saddle points.  Finally, we demonstrate improved optimization on several neural
network training problems.

For the future, we are mainly interested in two directions. The first direction
is to explore methods beyond Kyrylov subspaces, such as one in
\citep{Jascha2014}, that allow the saddle-free Newton method to scale to high
dimensional problems, where we cannot easily compute the entire Hessian matrix.
In the second direction, the theoretical properties of critical points in the
problem of training a neural network will be further analyzed.  More generally,
it is likely that a deeper understanding of the statistical properties of high
dimensional error surfaces will guide the design of novel non-convex
optimization algorithms that could impact many fields across science and
engineering.

\subsubsection*{Acknowledgments}
We would like to thank the developers of
Theano~\citep{bergstra+al:2010-scipy,Bastien-Theano-2012}.  We would also like
to thank CIFAR, and Canada Research Chairs for funding, and Compute Canada, and
Calcul Qu\'ebec for providing computational resources. Razvan Pascanu is
supported by a DeepMind Google Fellowship. Surya Ganguli
thanks the Burroughs Wellcome and Sloan Foundations for support.

\bibliography{myrefs}
\bibliographystyle{natbib}
\newpage
\appendix
\section{Description of the different types of saddle-points}

In general, consider an error function $\LL(\theta)$  where $\theta$ is an $N$
dimensional continuous variable. A critical point is by definition a point
$\theta$ where the gradient of $\LL(\theta)$ vanishes.  All critical points of
$f(\theta)$ can be further characterized by the curvature of the function in
its vicinity, as described by the eigenvalues of the Hessian.  Note that the
Hessian is symmetric and hence the eigenvalues are real numbers. The following
are the four possible scenarios: 

\begin{itemize}
\item If all eigenvalues are non-zero and  positive, then the critical point is
a local minimum.  
\item If all eigenvalues are non-zero and negative, then the critical point is
a local maximum.
\item If the eigenvalues are non-zero and we have both positive and negative
eigenvalues, then the critical point is a saddle point with a \emph{min-max}
structure (see Figure~\ref{fig:different_saddle} (b)). That is, if we restrict
the function $\LL$ to the subspace spanned by the eigenvectors corresponding to
positive (negative) eigenvalues, then the saddle point is a maximum (minimum)
of this restriction.
\item If the Hessian matrix is singular, then the \emph{degenerate} critical
point can be a saddle point, as it is, for example, for $\theta^3, \theta
\in\RR$ or for the monkey saddle (Figure~\ref{fig:different_saddle} (a) and
(c)).  If it is a saddle, then, if we restrict $\theta$ to only change along
the direction of singularity, the restricted function does not exhibit a
minimum nor a maximum; it exhibits, to second order, a plateau. When moving from
one side to other of the plateau, the eigenvalue corresponding to this picked
direction generically changes sign, being exactly zero at the critical point.
Note that an eigenvalue of zero can also indicate the presence of a gutter
structure, a degenerate minimum, maximum or saddle, where a set of connected
points are all minimum, maximum or saddle structures of the same shape
and error.  In Figure~\ref{fig:different_saddle} (d) it is shaped as a circle.
The error function looks like the bottom of a wine bottle, where all points
along this circle are minimum of equal value.
\end{itemize}

A plateau is an almost flat region in some direction.  This structure is given
by having the eigenvalues (which describe the curvature) corresponding to the
directions of the plateau be \emph{close to 0}, but \emph{not exactly 0}. Or,
additionally, by having a large discrepancy between the norm of the
eigenvalues. This large difference would make the direction of ``relative''
small eigenvalues look like flat compared to the direction of large
eigenvalues.

\begin{figure}
    \centering
    \begin{minipage}{0.48\textwidth}
        \centering
        \includegraphics[width=0.85\columnwidth, clip=true, trim=2cm .5cm 2cm .5cm]{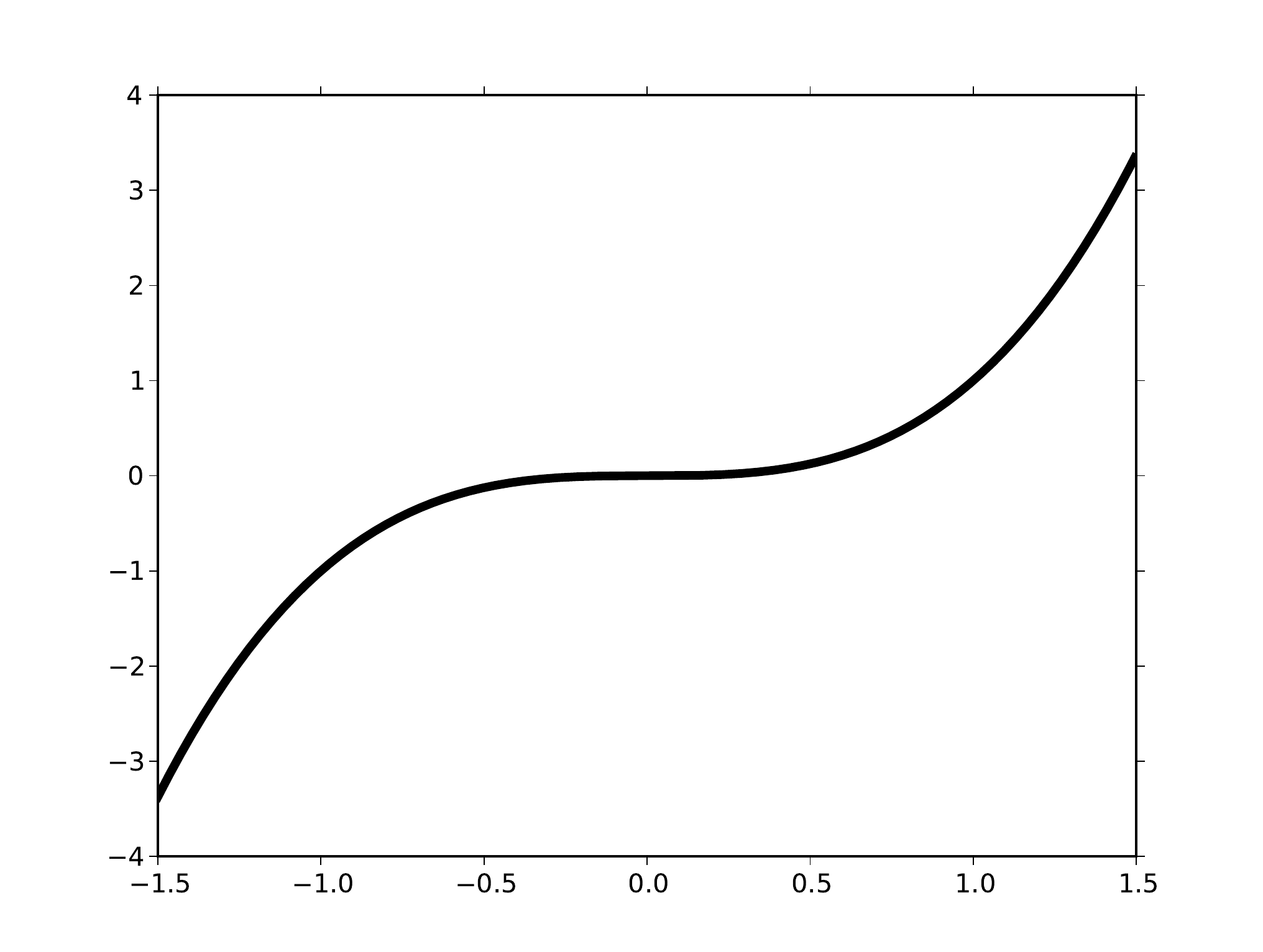}
    \end{minipage}
    \hfill
    \begin{minipage}{0.48\textwidth}
        \centering
        \includegraphics[width=1.\columnwidth, clip=true, trim=2cm 1cm 2cm 1cm]{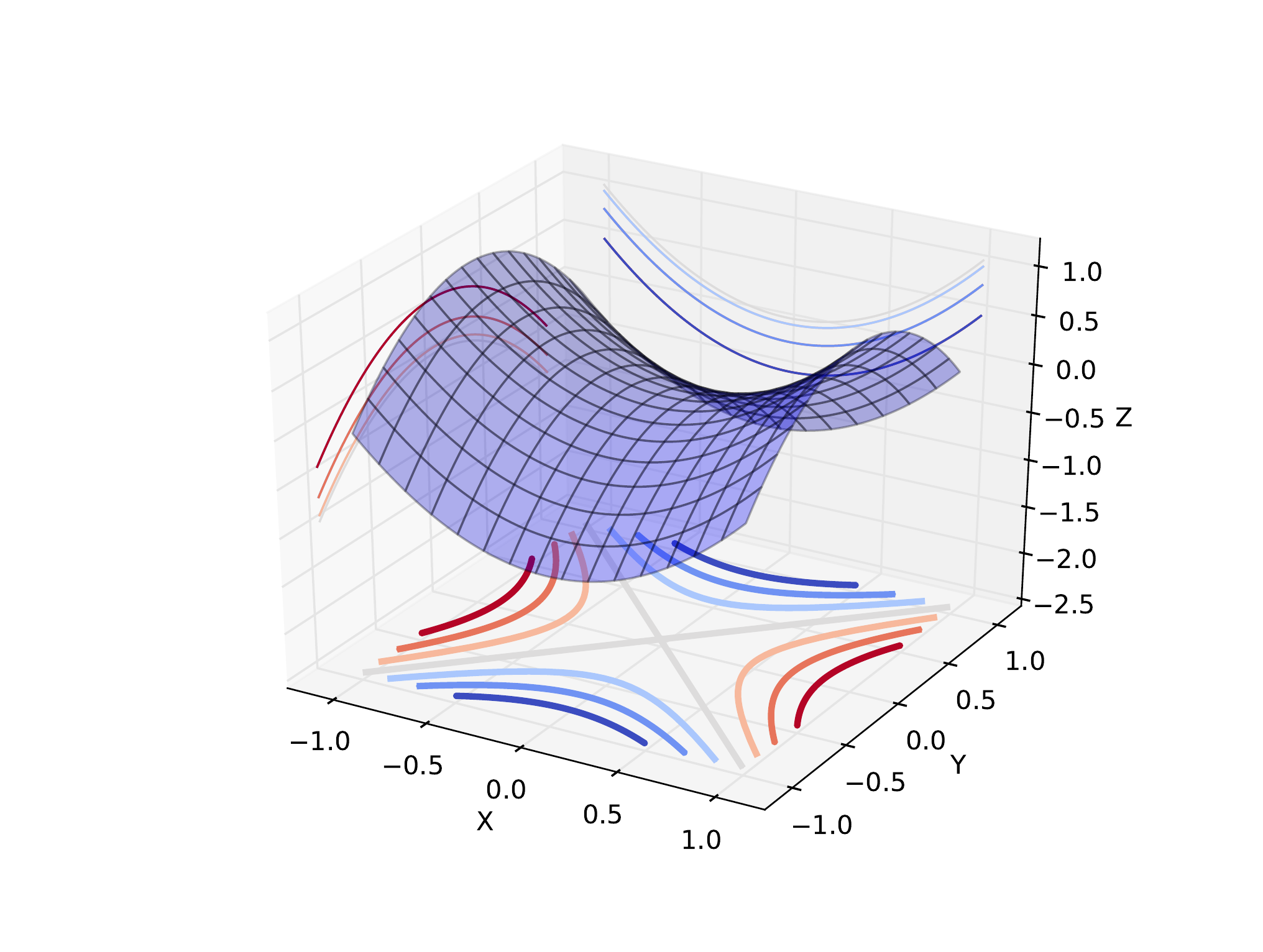}
    \end{minipage}
    \\
    \vspace{3mm}
    \begin{minipage}{0.24\textwidth}
        \centering
        (a) 
    \end{minipage}
    \hfill
    \begin{minipage}{0.24\textwidth}
        \centering
        (b) 
    \end{minipage}
    \\
    \begin{minipage}{0.48\textwidth}
        \centering
        \includegraphics[width=1.\columnwidth, clip=true, trim=2cm 1cm 2cm 1cm]{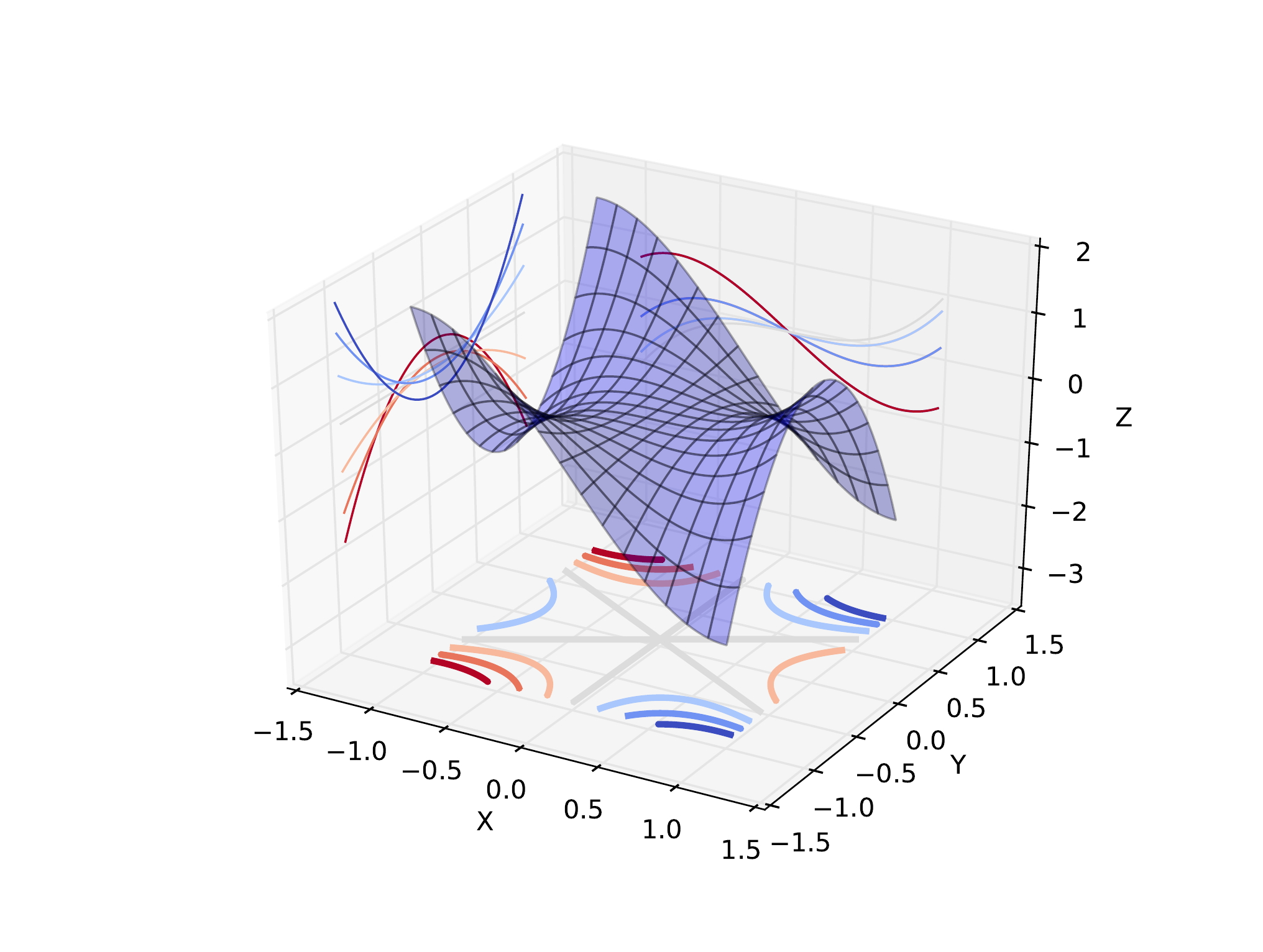}
    \end{minipage}
    \hfill
    \begin{minipage}{0.48\textwidth}
        \centering
        \includegraphics[width=1.\columnwidth, clip=true, trim=2cm 2cm 2cm 2cm]{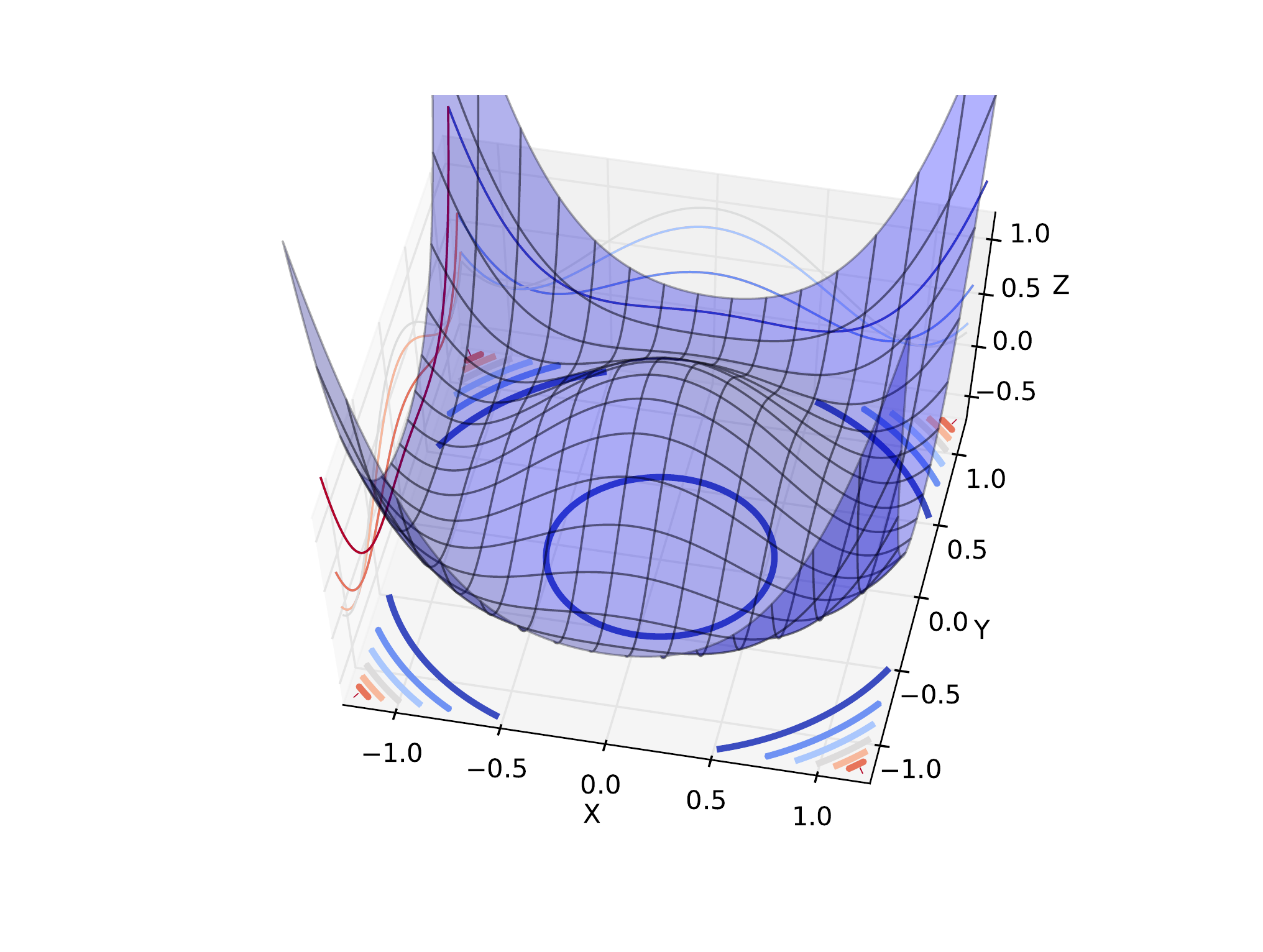}
    \end{minipage}
    \\
    \vspace{3mm}
    \begin{minipage}{0.48\textwidth}
        \centering
        (c) 
    \end{minipage}
    \hfill
    \begin{minipage}{0.48\textwidth}
        \centering
        (d) 
    \end{minipage}

\caption[Different saddle point structures] {Illustrations of three different
types of saddle points (a-c) plus a gutter structure (d).  Note that for the
gutter structure, any point from the circle $x^2 + y^2 = 1$ is a minimum.  The
shape of the function is that of the bottom of a bottle of wine. This means
that the minimum is a ``ring'' instead of a single point. The Hessian is
singular at any of these points. (c) shows a Monkey saddle where you have both
a min-max structure as in (b) but also a 0 eigenvalue, which results, along
some direction, in a shape similar to (a).
}
\label{fig:different_saddle}
\end{figure}

\section{Reparametrization of the space around saddle-points}
\label{sec:apx_reparam}

This reparametrization is given by taking a Taylor expansion of the function
$\LL$ around the critical point. If we assume that the Hessian is not singular,
then there is a neighbourhood around this critical point where this
approximation is reliable and, since the first order derivatives vanish, the
Taylor expansion is given by:

\begin{equation}
\LL(\theta^* + \Dtheta) = \LL(\theta^*) + \frac{1}{2} (\Dtheta) ^\top
\hess \Dtheta
\end{equation}

Let us denote by ${\es\el 1}, \ldots, {\es\el {n_\theta}}$ the eigenvectors of
the Hessian $\hess$ and by ${\lambda\el 1}, \ldots, {\lambda\el {n_\theta}}$
the corresponding eigenvalues.  We can now make a change of coordinates into
the space span by these eigenvectors:

\begin{equation}
\Delta \vv = \frac{1}{2}\left[
\begin{array}{c}
{\es\el{1}}^\top \\
\ldots \\
{\es\el{n_\theta}}^\top 
\end{array}
\right] \Dtheta
\end{equation}

\begin{equation}
\label{eq:new_system_coord}
\LL(\theta^* + \Dtheta) = \LL(\theta^*) + \frac{1}{2}\sum_{i=1}^{n_\theta}
{\lambda\el i} ({\es\el i}^\top \Dtheta)^2 = \LL(\theta^*) + \sum_{i=1}^{n_\theta}
{\lambda\el i} \Delta \vv_i^2 
\end{equation}

\section{Empirical exploration of properties of critical points}
\label{sec:apx_emp}

To obtain the plot on MNIST we used the Newton method to discover nearby critical
points along the path taken by the saddle-free Newton algorithm.  We consider
20 different runs of the saddle-free algorithm, each using a different random
seed. We then run 200 jobs. The first 100 jobs are looking for critical points
near the value of the parameters obtained after some random number of epochs
(between 0 and 20) of a randomly selected run (among the 20 different runs) of
saddle-free Newton method.  To this starting position uniform noise is added of
small amplitude (the amplitude is randomly picked between the different values
$\{10^{-1}, 10^{-2}, 10^{-3}, 10^{-4}\}$ The last 100 jobs look for critical points 
near  uniformally sampled
weights (the range of the weights is given by the unit cube).  
The task (dataset and model) is the same as the one used previously.

To obtain the plots on CIFAR, we have trained multiple 3-layer deep neural
networks using SGD. The activation function of these networks is the tanh
function. We saved the parameters of these networks for each epoch. We trained
100 networks with different parameter initializations between 10 and 300 epochs
(chosen randomly). The networks were then trained using the Newton method to
find a nearby critical point. This allows us to find many different critical
points along the learning trajectories of the networks.

\section{Proof of Lemma~\ref{lemma:constraint}}
\label{sec:apx_proof}

\begin{lemma}
Let $\Abf$ be a nonsingular square matrix in $\RR^{n} \times \RR^{n}$, and
$\example \in \RR^n$ be some vector. Then it holds that $|\example^\top \Abf
\example | \leq \example^\top |\Abf| \example$, where $|\Abf|$ is the matrix
obtained by taking the absolute value of each of the eigenvalues of $\Abf$.
\end{lemma}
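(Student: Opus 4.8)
The plan is to reduce the quadratic form to a diagonal one by passing to the eigenbasis of $\Abf$, and then to control the sign cancellations with a single application of the triangle inequality. Since the lemma is applied to the Hessian, I treat $\Abf$ as real symmetric, hence orthogonally diagonalizable with real eigenvalues; this is exactly the setting in which $|\Abf|$ is unambiguously defined through its eigendecomposition, and the nonsingularity hypothesis ensures that no eigenvalue is zero, so the absolute values are taken of genuinely nonzero numbers.

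First I would invoke the spectral theorem to write $\Abf = \sum_{i=1}^{n} \lambda\el i \, \es\el i \es\el i^\top$, where $\{\es\el i\}_{i=1}^{n}$ is an orthonormal basis of eigenvectors and $\lambda\el i \in \RR$ are the corresponding eigenvalues. By the very definition of $|\Abf|$, it shares these eigenvectors and replaces each $\lambda\el i$ by its magnitude, so that $|\Abf| = \sum_{i=1}^{n} |\lambda\el i| \, \es\el i \es\el i^\top$.

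Next I would expand both quadratic forms in this basis. Setting the projections $p_i = \es\el i^\top \example$, orthonormality yields $\example^\top \Abf \example = \sum_{i=1}^{n} \lambda\el i \, p_i^2$ and $\example^\top |\Abf| \example = \sum_{i=1}^{n} |\lambda\el i| \, p_i^2$, with all cross terms vanishing. The crucial observation is that each coefficient $p_i^2 = (\es\el i^\top \example)^2$ is nonnegative. The inequality then follows immediately from the triangle inequality applied to this scalar sum: $\left| \example^\top \Abf \example \right| = \left| \sum_{i=1}^{n} \lambda\el i \, p_i^2 \right| \leq \sum_{i=1}^{n} |\lambda\el i| \, p_i^2 = \example^\top |\Abf| \example$, where the middle step uses $p_i^2 \geq 0$ to move the absolute value inside the sum term by term.

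The only obstacle worth flagging is conceptual rather than computational: the argument rests entirely on $\Abf$ admitting an orthonormal eigenbasis with real spectrum, which is what makes $|\Abf|$ well defined and what forces the off-diagonal contributions to disappear. For a genuinely nonsymmetric matrix with complex eigenvalues the statement would require reinterpretation of both $|\Abf|$ and the quadratic form, but the symmetric Hessian case is precisely the one demanded by the application in Eq.~\eqref{eq:constr}.
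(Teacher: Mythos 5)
Your proof is correct and follows essentially the same route as the paper's: expand $\example$ in the orthonormal eigenbasis of $\Abf$, write the quadratic form as $\sum_i \lambda\el i (\es\el i^\top \example)^2$, and apply the triangle inequality using the nonnegativity of the squared projections. Your explicit flagging of the symmetry hypothesis (needed for an orthonormal real eigenbasis) is a point the paper's statement and proof leave implicit, but it does not change the argument.
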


\begin{proof}
Let $\es\el 1, \ldots \es\el n$ be the different eigenvectors of $\Abf$ and
$\lambda\el 1, \ldots \lambda\el n$ the corresponding eigenvalues. We now
re-write the identity by expressing the vector $\example$ in terms of these
eigenvalues: 

\begin{align}
|\example^\top \Abf \example|& = \left|\sum_i (\example^\top \es\el i){\es\el{i}}^\top \Abf \example \right| 
 = \left|\sum_i (\example^\top \es\el i)\lambda\el i ({\es\el  i}^\top \example) \right| 
 = \left|\sum_i \lambda\el i (\example^\top \es\el i)^2 \right| \nonumber
\end{align}

We can now use the triangle inequality $|\sum_i x_i| \leq \sum_i |x_i|$ and get
that 

\begin{align}
|\example^\top \Abf \example|& \leq \sum_i |(\example^\top \es\el i)^2 \lambda\el i| 
= \sum_i (\example^\top \es\el i)|\lambda\el i| ({\es\el i}^\top\example) 
 = \example^\top |\Abf | \example \nonumber
\end{align}

\end{proof}

\section{Implementation details for approximate saddle-free Newton}

The Krylov subspace is obtained through a slightly modified Lanczos process
(see Algorithm \ref{alg:lanczos}). The initial vector of the algorithm is
the gradient of the model. As noted by \citet{VinyalsAISTATS12}, we found it
was useful to include the previous search direction as the last vector of the
subspace.

As described in the main paper, we have $\frac{\partial \hat{f}}{\partial
\alpha} = {\bf V} \left(\frac{\partial f}{\partial \theta}\right)^\top$ and 
$\frac{\partial^2 \hat{f}}{\partial \alpha^2} = {\bf V} \left(\frac{\partial^2
f}{\partial \theta^2}\right) {\bf V}^\top$. Note that the calculation of the
Hessian in the subspace can be greatly sped up by memorizing the vectors
${\bf \mathbf{V}}_{i} \frac{\partial^2 f}{\partial \theta^2}$ during the Lanczos
process. Once memorized, the Hessian is simply the product of the two matrices
${\bf V}$ and ${\bf \mathbf{V}}_{i} \frac{\partial^2 f}{\partial \theta^2}$.

We have found that it is beneficial to perform multiple optimization steps
within the subspace. We do not recompute the Hessian for these steps under the
assumption that the Hessian will not change much.

\begin{algorithm}[H]
\caption{Obtaining the Lanczos vectors}
\label{alg:lanczos}
\begin{algorithmic}
    \REQUIRE ${\bf g} \gets -\frac{\partial f}{\partial \theta}$
    \REQUIRE ${\Delta \theta}$ \text{(The past weight update)}
    
    \STATE ${\bf \mathbf{V}}_0 \gets 0$
    \STATE ${\bf \mathbf{V}}_1 \gets \frac{{\bf g}}{\| {\bf g} \|} $
    \STATE $\beta_1 \gets 0$
    
    \FOR{$i = 1 \to k - 1$}
    \STATE ${\bf w}_i \gets {\bf \mathbf{V}}_{i} \frac{\partial^2 f}{\partial \theta^2} \text{(Implemented efficient through L-Op \citep{Pearlmutter94fastexact})}$ 
    
    \IF{$i = k - 1$}
        \STATE ${\bf w}_i \gets \Delta \theta$
    \ENDIF
    
    \STATE $\alpha_i \gets {\bf w}_i {\bf \mathbf{V}}_{i}$
    
    \STATE ${\bf w}_i \gets {\bf w}_i - \alpha_i {\bf \mathbf{V}}_{i} - \beta_i {\bf \mathbf{V}}_{i-1}$
    
    \STATE $\beta_{i+1} \gets \|{\bf w}_i\|$
    \STATE ${\bf \mathbf{V}}_{i+1} \gets \frac{\bf w}{\|{\bf w}_i\|}$
    
    \ENDFOR
\end{algorithmic}
\end{algorithm}

\section{Experiments}

\subsection{Existence of Saddle Points in Neural Networks}

For feedforward networks using SGD, we choose the following 
hyperparameters using the random search strategy~\citep{Bergstra+Bengio-2012-small}:
\begin{itemize}
    \item Learning rate
    \item Size of minibatch
    \item Momentum coefficient
\end{itemize}
For random search, we draw 80 samples and pick the best one.

For both the Newton and saddle-free Newton methods, the damping coefficient is
chosen at each update, to maximize the improvement, among $\left\{ 10^0,
    10^{-1}, 10^{-2}, 10^{-3}, 10^{-4}, 10^{-5} \right\}$.

\subsection{Effectiveness of saddle-free Newton Method in Deep Neural Networks}

The deep auto-encoder was first trained using the protocol used by
\citet{SutskeverMartensDahlHinton_icml2013}. In these experiments we use
classical momentum.

\subsection{Recurrent Neural Networks: Hard Optimization Problem}

We initialized the recurrent weights of RNN to be orthogonal as suggested by
\citet{Saxe-ICLR2014}. The number of hidden units of RNN is fixed to 120. For
recurrent neural networks using SGD, we choose the following hyperparameters
using the random search strategy:

\begin{itemize}
    \item Learning rate
    \item Threshold for clipping the gradient~\citep{Pascanu+al-ICML2013-small}
    \item Momentum coefficient
\end{itemize}

For random search, we draw 64 samples and pick the best one. Just like in the
experiment using feedforward neural networks, the damping coefficient of both
the Newton and saddle-free Newton methods was chosen at each update, to maximize
the improvement.

We clip the gradient and saddle-free update step if it exceeds certain threshold
as suggested by \citet{Pascanu+al-ICML2013-small}.

Since it is costly to compute the exact Hessian for RNN's, we used the
eigenvalues of the Hessian in the Krylov subspace to plot the distribution of
eigenvalues for Hessian matrix in Fig.~\ref{fig:sfn} (d).

\end{document}